\pgfplotsset{compat=newest}
\pgfplotsset{plot coordinates/math parser=false}
\newlength\figureheight
\newlength\figurewidth
\definecolor{MPG}{RGB}{000,125,122}
\newcommand{\matlab}{\texttt{MATLAB}}
\renewcommand{\Re}{\mathbb{R}} 
\newcommand{\N}{\mathcal{N}} 
\newcommand{\GP}{\mathcal{GP}} 
\renewcommand{\vec}{\boldsymbol} 
\newcommand{\mat}{\boldsymbol} 
\newcommand{\Id}{\vec{I}} 
\newcommand{\g}{\,|\,} 
\newcommand{\de}{\partial} 
\newcommand{\ec}{\equalscolon} 
\newcommand{\abs}[1]{\lvert #1\rvert} 
\newcommand{\norm}[1]{\lvert\lvert #1\rvert\rvert}
\newcommand{\ind}[1]{\mathbb{I}_{#1}} 
\newcommand{\q}{\quad}
\newcommand{\qq}{\qquad}
\newcommand{\qqqq}{\qquad\qquad}
\newtheorem{thm}{Theorem}
\newcommand{\dk}{\tensor[^{\de}]{k}{}}
\newcommand{\kd}{\tensor{k}{^{\de}}}
\newcommand{\dkd}{\tensor[^{\de}]{k}{^{\de}}}
\newcommand{\dKd}{\tensor[^{\de}]{K}{^{\de}}}
\newcommand{\ssq}{\sigma^2}
\pgfplotsset{compat=newest}
\pgfplotsset{plot coordinates/math parser=false}
\newcommand{%
  \input{.tikz}%
}[1]{%
  \input{#1.tikz}%
}
\definecolor{MPG}{RGB}{000,125,122}
\definecolor{dred}{RGB}{130,0,0}
\definecolor{ora}{HTML}{FF9933}
\definecolor{dblu}{RGB}{0,0,130}
\newcommand{\david}[1]{}
\newcommand{\michael}[1]{}
\newcommand{\hennig}[1]{}
\newcommand{\yell}[1]{}
\title{Probabilistic ODE Solvers with Runge-Kutta Means}
\author{
Michael Schober\\
MPI for Intelligent Systems\\
T\"ubingen, Germany\\
\texttt{mschober@tue.mpg.de} \\
\And
David Duvenaud \\
Department of Engineering \\
Cambridge University \\
\texttt{dkd23@cam.ac.uk} \\
\And
Philipp Hennig \\
MPI for Intelligent Systems\\
T\"ubingen, Germany\\
\texttt{phennig@tue.mpg.de} \\
}
\begin{document}

\maketitle

\begin{abstract}
  Runge-Kutta methods are the classic family of solvers for ordinary
  differential equations (ODEs), and the basis for the state of the art. Like
  most numerical methods, they return point estimates. We construct a family of
  \emph{probabilistic} numerical methods that instead return a
  Gauss-Markov process defining a probability distribution over the ODE
  solution. In contrast to prior work, we construct this family such that
  posterior means match the outputs of the Runge-Kutta family exactly, thus
  inheriting their proven good properties. Remaining degrees of freedom not
  identified by the match to Runge-Kutta are chosen such that the posterior
  probability measure fits the observed structure of the ODE. Our results shed
  light on the structure of Runge-Kutta solvers from a new direction, provide a
  richer, probabilistic output, have low computational cost, and raise new
  research questions.
\end{abstract}

\section{Introduction}

Differential equations are a basic feature of dynamical systems.
Hence, researchers in machine learning have repeatedly been interested
in both the problem of inferring an ODE description from observed
trajectories of a dynamical system
\cite{graepel2003solving,calderhead2008accelerating,dondelinger2013ode,wang14:GP-ODE},
and its dual, inferring a solution (a trajectory) for an ODE initial
value problem (IVP) \cite{skilling1991bayesian, HennigAISTATS2014,
  Schober,o.13:_bayes_uncer_quant_differ_equat}. Here we address the
latter, classic numerical problem. Runge-Kutta (RK) methods \cite{Runge,
  Kutta} are standard tools for this purpose. Over more than a
century, these algorithms have matured into a very well-understood,
efficient framework \citep{hairer87:_solvin_ordin_differ_equat_i}.

As recently pointed out by \citet{HennigAISTATS2014}, since
Runge-Kutta methods are linear extrapolation methods, their structure
can be emulated by Gaussian process (GP) regression algorithms. Such
an algorithm was envisioned by Skilling in 1991
\cite{skilling1991bayesian}, and the idea has recently attracted both
theoretical \cite{o.13:_bayes_uncer_quant_differ_equat} and practical
\cite{HennigAISTATS2014, Schober} interest. By returning a posterior
probability measure over the solution of the ODE problem, instead of a
point estimate, Gaussian process solvers extend the functionality of
RK solvers in ways that are particularly interesting for machine
learning. Solution candidates can be drawn from the posterior and
marginalized \cite{Schober}. This can allow probabilistic solvers to
stop earlier, and to deal (approximately) with probabilistically
uncertain inputs and problem definitions
\cite{HennigAISTATS2014}. However, current GP ODE solvers do not share
the good theoretical convergence properties of Runge-Kutta
methods. Specifically, they do not have high polynomial order,
explained below.

We construct GP ODE solvers whose posterior mean functions
\emph{exactly} match those of the RK families of first, second and
third order. This yields a probabilistic numerical method which
combines the strengths of Runge-Kutta methods with the additional
functionality of GP ODE solvers. It also provides a new interpretation
of the classic algorithms, raising new conceptual questions.

While our algorithm could be seen as a ``Bayesian'' version of the
Runge-Kutta framework, a philosophically less loaded interpretation is
that, where Runge-Kutta methods fit a single curve (a point estimate)
to an IVP, our algorithm fits a probability distribution over such
potential solutions, such that the mean of this distribution matches
the Runge-Kutta estimate exactly. We find a family of models in the
space of Gaussian process linear extrapolation methods with this
property, and select a member of this family (fix the remaining
degrees of freedom) through statistical estimation.


\section{Background}
\label{sec:background}

\label{sec:ivp-formulation}
An ODE \emph{Initial Value Problem (IVP)} is to find a function
$x(t):\Re\to\Re^N$ such that the ordinary differential equation
$\dot{x} = f(x,t)$ (where $\dot{x}=\de x / \de t$) holds for all $t\in
T = [t_0, t_H]$, and $x(t_0) = x_0$. We assume that a unique solution
exists. To keep notation simple, we will treat $x$ as
scalar-valued; the multivariate extension is straightforward (it
involves $N$ separate GP models, explained in supp.).

\label{sec:rk-methods}
Runge-Kutta methods\footnote{In this work, we only address so-called
  \emph{explicit} RK methods (shortened to ``Runge-Kutta methods'' for
  simplicity). These are the base case of the extensive theory of RK
  methods. Many generalizations can be found in
  \cite{hairer87:_solvin_ordin_differ_equat_i}. Extending the probabilistic
  framework discussed here to the wider Runge-Kutta class is not trivial.}
\cite{Runge, Kutta} are carefully designed linear extrapolation methods
operating on small contiguous subintervals $[t_n,t_{n} + h]\subset T$ of length
$h$. Assume for the moment that
$n = 0$%
. Within $[t_0,t_0 + h]$, an RK method of \emph{stage}
$s$ collects evaluations $y_i=f(\hat{x}_i,t_0+h c_i)$ at $s$
recursively defined input locations, $i=1,\dots,s$, where $\hat{x}_i$
is constructed \emph{linearly} from the previously-evaluated $y_{j<i}$ as
\begin{equation}
\label{eq:xhat}
\hat{x}_i = x_0 + h \sum_{j=1}^{i-1} w_{ij} y_j,
\end{equation}
then returns a single prediction for the solution of the IVP at
$t_0+h$, as $\hat{x}(t_0+h) = x_0 + h \sum_{i=1}^s b_i y_i$ (modern
variants can also construct non-probabilistic error estimates, e.g.~by
combining the same observations into two different RK predictions
\cite{dormand1980family}). In compact form,
\begin{align}\label{eq:rk} 
y_i = f\left(x_0 + h \sum_{j=1}^{i-1} w_{ij} y_j,\;t_0 + h c_i\right),
\q i = 1, \dotsc, s,
\qq \hat{x}(t_0+h) = x_0 + h \sum_{i=1}^s b_i y_i.
\end{align}
$\hat{x}(t_0+h)$~is then taken as the initial value for $t_1 = t_0 + h$ and
the process is repeated until $t_n + h \geq t_H$.

A Runge-Kutta method is thus identified by a lower-triangular matrix
$\mat{W} = \{w_{ij}\}$, and vectors $\vec{c} = [c_1, \dots, c_s]$,
$\vec{b} = [b_1, \dots, b_s]$, often presented compactly in a
\emph{Butcher tableau} \citep{butcher1963coefficients}:
\begin{equation}
\begin{array}{c|ccccc}
c_1 & 0\\
c_2 & w_{21} & 0\\
c_3 & w_{31} & w_{32} & 0\\
\vdots & \vdots & \vdots & \ddots & \;\ddots\\
c_s & w_{s1} & w_{s2} & \cdots & w_{s,s-1} & 0\\
\hline
& b_1 & b_2 & \cdots & b_{s-1} & b_s
\end{array}\notag
\end{equation}
As \citet{HennigAISTATS2014} recently pointed out, the linear
structure of the extrapolation steps in Runge-Kutta methods means that
their algorithmic structure, the Butcher tableau, can be constructed
naturally from a Gaussian process regression method over $x(t)$, where
the $y_i$ are treated as ``observations'' of $\dot x(t_0+hc_i)$ and
the $\hat{x}_i$ are subsequent posterior estimates (more
below). However, proper RK methods have structure that is not
generally reproduced by an arbitrary Gaussian process prior on $x$:
Their distinguishing property is that the approximation $\hat{x}$ and
the Taylor series of the true solution coincide at $t_0 + h$ up to the
$p$-th term---their numerical error is bounded by $\norm{x(t_0 + h) -
  \hat{x}(t_0+h)} \leq K h^{p+1}$ for some constant $K$ (higher orders
are better, because $h$ is assumed to be small). The method is then
said to be \emph{of order $p$}
\cite{hairer87:_solvin_ordin_differ_equat_i}. A method is
\emph{consistent}, if it is of order $p=s$. This is only possible for
$p < 5$ \cite{ceschino1963problemes,shanks1966solutions}. There are no
methods of order $p > s$. High order is a strong desideratum for ODE
solvers, not currently offered by Gaussian process extrapolators.

\begin{table}[bt]
  \begin{tabular*}{\textwidth}{@{\extracolsep{\fill}}ccc}
  \toprule
  $p = 1$ & $p = 2$ & $p = 3$ \\
  \midrule
  $
    \begin{array}{c|ccc}0&0\\\hline&1\end{array}
  $
  &

  $
\begin{array}{c|cc}
0 & 0\\
\alpha & \alpha & 0\\
\hline
\rule{0pt}{11pt}& (1 - \frac{1}{2\alpha}) & \frac{1}{2\alpha}
\end{array}
  $

  &

  $
\begin{array}{c|ccc}
0 & 0\\
u & u & 0\\
v & v - \frac{v(v-u)}{u(2-3u)} & \frac{v(v-u)}{u(2-3u)} & 0\\
\hline
\rule{0pt}{11pt}  & 1 - \frac{2-3v}{6u(u-v)} - \frac{2-3u}{6v(v-u)} & \frac{2-3v}{6u(u-v)} & \frac{2-3u}{6v(v-u)}
\end{array}
  $\\
  \bottomrule
  \end{tabular*}
  \caption{All consistent Runge-Kutta methods of order $p \leq 3$ and
    number of stages $s = p$ (see \citep{hairer87:_solvin_ordin_differ_equat_i}).}
  \label{eq:low-order-rks}
\end{table}


Table~\ref{eq:low-order-rks} lists all consistent methods of order $p \leq 3$
where $s = p$.  For $s=1$, only \emph{Euler's method} (linear
extrapolation) is consistent. For $s=2$, there exists a family of
methods of order $p=2$, parametrized by a single parameter $\alpha \in
(0, 1]$,
where $\alpha = \nicefrac{1}{2}$ and $\alpha=1$ mark the
\emph{midpoint rule} and \emph{Heun's method}, respectively.
For $s=3$, third order methods are parameterized by two variables $u,
v \in (0, 1]$.

\label{sec:gps-for-odes}
\emph{Gaussian processes (GPs)} are well-known in the NIPS community,
so we omit an introduction. We will use the standard notation $\mu :
\Re \to \Re$ for the mean function, and $k : \Re \times \Re
\to \Re$ for the covariance function; $k_{UV}$ for Gram matrices
of kernel values $k(u_i,v_j)$, and analogous for the mean function:
$\mu_T = [\mu(t_1),\dots,\mu(t_N)]$. A GP prior $p(x)=\GP(x; \mu, k)$
and observations $(T, Y) = \{(t_1, y_1), \dots, (t_s, y_s)\}$ having
likelihood $\N(Y;x_T,\Lambda)$ give rise to a posterior $\GP^s(x;
\mu^s, k^s)$ with
\begin{equation}
\mu^s _t = \mu_t + k_{tT} (k_{TT}+\Lambda) ^{-1}(Y-\mu_T) \qq\text{and}\qq
k^s _{uv} = k_{uv} - k_{uT}(k_{TT}+\Lambda)^{-1}k_{Tv}.
\label{eq:posterior}
\end{equation}
GPs are closed under linear maps. In particular, the joint distribution over
$x$ and its derivative is
\begin{gather}
  \label{eq:2}
  p\left[
    \begin{pmatrix}
      x \\ \dot x
    \end{pmatrix}
\right] = \GP\left[
  \begin{pmatrix}
      x \\ \dot x
    \end{pmatrix};
    \begin{pmatrix}
      \mu \\ \mu^{\de}
    \end{pmatrix},
    \begin{pmatrix}
      k & \kd \\ \dk & \dkd
    \end{pmatrix}
  \right]\\
  \text{with}\qq \mu^\de = \frac{\de \mu(t)}{\de t},\q \kd = \frac{\de k(t,t')}{\de t'}, \q \dk
  = \frac{\de k(t,t')}{\de t}, \q \dkd = \frac{\de^2 k(t,t')}{\de t\de t'}.
\end{gather}
A recursive algorithm analogous to RK methods can be constructed
\citep{skilling1991bayesian,HennigAISTATS2014} by setting the prior
mean to the constant $\mu(t)=x_0$, then recursively estimating
$\hat{x}_i$ in some form from the current posterior over $x$. The
choice in \citep{HennigAISTATS2014} is to set $\hat{x}_i =
\mu^i(t_0+hc_i)$.  ``Observations'' $y_i=f(\hat{x}_i,t_0+hc_i)$ are
then incorporated with likelihood
$p(y_i\g x) = \N(y_i;\dot x(t_0+hc_i),\Lambda)$.
This recursively gives estimates 
\begin{equation}
  \label{eq:3}
  \hat{x}(t_0+hc_i) = x_0 + \sum_{j=1}^{i-1}\sum_{\ell=1}^{i-1} \kd(t_0+hc_i,
  t_0+hc_\ell) (\dKd +\Lambda)^{-1} _{\ell j} y_j = x_0 + h\sum_j w_{ij}y_j, 
\end{equation}
with $\dKd_{ij} = \dkd(t_0+hc_i,t_0 + hc_j)$. The final prediction
is the posterior mean at this point:
\begin{equation}
  \label{eq:4}
  \hat{x}(t_0+h) = x_0 + \sum_{i=1}^s \sum_{j=1}^s \kd(t_0+h, t_0+hc_j)
  (\dKd+\Lambda)^{-1}_{ji} y_i = x_0 + h\sum_{i} ^s b_i y_i.
\end{equation}
\michael{Double check these last two formulas. Indices might be off!}

\section{Results}
\label{sec:rel-gp-rk}
The described GP ODE estimate shares the algorithmic structure
of RK methods (i.e.~they both use weighted sums of the constructed
estimates to extrapolate). However, in RK methods, weights and
evaluation positions are found by careful analysis of the Taylor
series of $f$, such that low-order terms cancel. In GP ODE solvers
they arise, perhaps more naturally but also with less structure, by
the choice of the $c_i$ and the kernel. In
previous work \cite{HennigAISTATS2014, Schober}, both were chosen ad
hoc, with no guarantee of convergence order. In fact, as is shown in
the supplements, the choices in these two works---square-exponential
kernel with finite length-scale%
, evaluations at the predictive mean---do not even give the
first order convergence of Euler's method. Below we present three
specific regression models based on integrated Wiener covariance
functions and specific evaluation points. Each model is the improper
limit of a Gauss-Markov process, such that the posterior distribution
after $s$ evaluations is a proper Gaussian process, and the posterior
mean function at $t_0+h$ coincides \emph{exactly} with the Runge-Kutta
estimate. We will call these methods, which give a probabilistic
interpretation to RK methods and extend them to return probability
distributions, \emph{Gauss-Markov-Runge-Kutta (GMRK) methods}, because
they are based on Gauss-Markov priors and yield Runge-Kutta
predictions.

\subsection{Design choices and desiderata for a probabilistic ODE solver}
\label{sec:arrangement}

Although we are not the first to attempt constructing an ODE solver
that returns a probability distribution, open questions still remain
about what, exactly, the properties of such a probabilistic numerical
method should be. \citet{o.13:_bayes_uncer_quant_differ_equat}
previously made the case that Gaussian measures are uniquely suited
because solution spaces of ODEs are Banach spaces, and provided
results on consistency. Above, we added the desideratum for the
posterior mean to have high order, i.e. to reproduce the Runge-Kutta
estimate. Below, three additional issues become apparent:

\paragraph{Motivation of evaluation points}
\label{sec:motiv-eval-points}

Both \citet{skilling1991bayesian} and \citet{HennigAISTATS2014}
propose to put the ``nodes'' $\hat{x}(t_0+hc_i)$ at the current
posterior mean of the belief. We will find that this can be made
consistent with the order requirement for the RK methods of first and
second order. However, our third-order methods will be forced to use a
node $\hat{x}(t_0+hc_i)$ that, albeit lying along a function~$w(t)$ in the
reproducing kernel
Hilbert space associated with the posterior GP covariance function, is
not the mean function itself. It will remain open whether the algorithm can be
amended to remove this blemish. However, as the nodes do not enter the GP
regression formulation, their choice does not directly affect the
probabilistic interpretation.

\paragraph{Extension beyond the first extrapolation interval}
\label{sec:extens-beyond-first}

Importantly, the Runge-Kutta argument for convergence order only holds
strictly for the first extrapolation interval $[t_0,t_0+h]$. From the
second interval onward, the RK step solves an estimated IVP, and
begins to accumulate a global estimation error not bounded by the
convergence order (an effect termed ``Lady Windermere's fan'' by
Wanner \cite{hairer12:_numer}). Should a probabilistic solver aim to
faithfully reproduce this imperfect chain of RK solvers, or rather try
to capture the accumulating global error?  We investigate both options
below.

\paragraph{Calibration of uncertainty}
\label{sec:calibr-error-meas}

A question easily posed but hard to answer is what it means for the
probability distribution returned by a probabilistic method to be well
calibrated. For our Gaussian case, requiring RK order in the posterior
mean determines all but one degree of freedom of an answer. The
remaining parameter is the output scale of the kernel, the ``error
bar'' of the estimate. We offer a relatively simple statistical
argument below that fits this parameter based on observed values of
$f$.

We can now proceed to the main results. In the following, we consider
extrapolation algorithms based on Gaussian process priors with
vanishing prior mean function, noise-free observation model
($\Lambda=0$ in Eq.~(\ref{eq:posterior})).  All covariance functions
in question are integrals over the kernel $k^0(\tilde{t},\tilde{t}') =
\ssq\min(\tilde{t}-\tau,\tilde{t}' -\tau)$ (parameterized by scale
$\ssq>0$ and off-set $\tau\in\Re$; valid on the domain
$\tilde{t},\tilde{t}'>\tau$), the covariance of the Wiener process
\citep{wiener1950extrapolation}. Such integrated Wiener processes are
Gauss-Markov processes, of increasing order, so inference in these
methods can be performed by filtering, at linear cost
\citep{sarkka2013bayesian}. We will use the shorthands
$t=\tilde{t}-\tau$ and $t' = \tilde{t}' - \tau$ for inputs shifted by
$\tau$.




\begin{figure}[t]
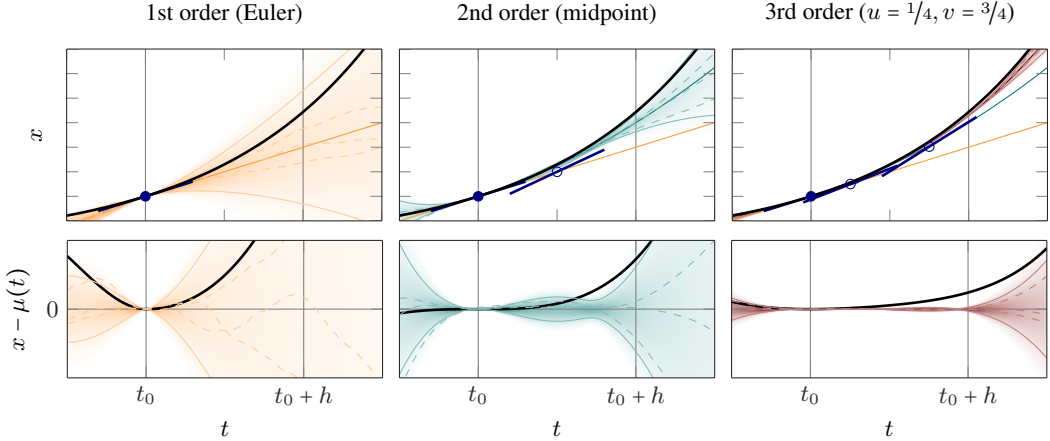

  \centering
  \footnotesize
  \setlength\figurewidth{0.3\textwidth}%
  \setlength\figureheight{0.1\textheight}%
  \mbox{%
    \hspace{1.2mm}%
  \input{figures/WP1-theorem.tikz}%
\hspace{-0.9mm}%
  \input{figures/WP2-theorem.tikz}%
\hspace{-0.9mm}%
  \input{figures/WP3-theorem.tikz}%
  }\\
  \setlength\figureheight{0.08\textheight}%
  \mbox{%
    \hspace{-4mm}
  \input{figures/WP1-theorem-std.tikz}%
  \input{figures/WP2-theorem-std.tikz}%
  \input{figures/WP3-theorem-std.tikz}%
  }\\
  \caption{{\bf Top:} Conceptual sketches. Prior mean in gray. Initial
    value at $t_0=1$ (filled blue). Gradient evaluations (empty blue
    circles, lines). Posterior (means) after first, second and
    third gradient observation in orange, green and red
    respectively. Samples from the final posterior as dashed
    lines. Since, for the second and third-order methods, only the
    final prediction is a proper probability distribution, for
    intermediate steps only mean functions are shown. True solution to
    (linear) ODE in black. {\bf Bottom:} For better visibility, same
    data as above, minus final posterior mean.}
  \label{fig:theorems}
\end{figure}

\subsection{Gauss-Markov methods matching Euler's method}
\label{sec:euler-method}
\begin{thm}
\label{thm:euler}
The once-integrated Wiener process prior $p(x)=\GP(x;0,k^1)$ with 
\begin{equation}
  k^1(t,t') = \iint_\tau ^{\tilde{t},\tilde{t}'} k^0(u,v) du\, dv =  \ssq \left(\frac{\min\nolimits^3(t,t')}{3}
    +\abs{t - t'} \frac{\min\nolimits^2(t,t')}{2}\right)
\end{equation}
choosing evaluation nodes at the posterior mean gives rise to Euler's
method.
\end{thm}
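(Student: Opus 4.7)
The plan is to show directly that conditioning the once-integrated Wiener process on $x(t_0)=x_0$ and $\dot x(t_0)=y_1$ yields a posterior whose mean at $t_0+h$ equals $x_0 + h y_1$, which is precisely one Euler step.

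First, I would check that the evaluation node is correct. Since Euler has $c_1=0$, the unique evaluation point sits at $t_0$ itself, and $\hat x_1$ is read off the posterior mean before any gradient observation. In the shifted coordinate $t = \tilde t - \tau$ with $\tau = t_0$, the prior covariance satisfies $k^1(0,0)=0$, so the initial condition $x(t_0)=x_0$ is accommodated via the improper-prior limit advertised in Sec.~\ref{sec:rel-gp-rk} (equivalently, let $\tau \to -\infty$ so that the marginal on $x(t_0)$ is flat). Under this interpretation the posterior mean at $t_0$ is exactly $x_0$, so $\hat x_1 = x_0$ and $y_1 = f(x_0, t_0)$, matching the first (and only) stage of Euler.

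Second, I would compute the posterior mean at $t_0+h$ under the observations $x(t_0)=x_0$ and $\dot x(t_0)=y_1$. A direct kernel computation makes this transparent: place the offset at $\tau = t_0 - s$ for some $s>0$, so all evaluations lie strictly in the interior of the process. The closed form for $k^1$ gives the $2\times 2$ observation Gram matrix $K = \ssq \left(\begin{smallmatrix} s^3/3 & s^2/2 \\ s^2/2 & s \end{smallmatrix}\right)$ for the joint of $x(s)$ and $\dot x(s)$, and cross-covariance vector $\ssq(s^3/3 + hs^2/2,\ s^2/2 + hs)$ with $x(s+h)$; a short algebraic simplification shows this vector times $K^{-1}$ equals $(1,\,h)$ identically in $s$ and $\ssq$, so the posterior mean at $t_0+h$ is $x_0 + h y_1$, and letting $s\to\infty$ realises the improper limit.

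A slicker alternative uses the Gauss-Markov/SDE representation of $k^1$: up to the scale $\sigma$, the derivative process $\dot x$ is a standard Wiener process, hence a martingale, so $\Exp[\dot x(s) \mid \dot x(t_0)] = \dot x(t_0)$ for $s\geq t_0$, and by linearity
\[
\Exp[x(t_0+h) \mid x(t_0), \dot x(t_0)] = x(t_0) + \int_{t_0}^{t_0+h} \Exp[\dot x(s) \mid \dot x(t_0)]\, ds = x_0 + h y_1.
\]
The main obstacle is purely nominal: applying (\ref{eq:3})--(\ref{eq:4}) naively at $c_1=0$ yields $0/0$ because $k^1$ is degenerate on the boundary of its domain, which is exactly why the ``improper Gauss-Markov'' language is needed. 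Once this is resolved, the identification with Euler's method is an immediate computation.
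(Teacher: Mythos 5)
Your central computation coincides with the paper's proof: condition the once-integrated Wiener process on the noise-free pair $(x(t_0),\dot x(t_0))=(x_0,y_1)$ and verify that $\begin{bmatrix}k(t_0+h,t_0) & \kd(t_0+h,t_0)\end{bmatrix}K^{-1}=\begin{bmatrix}1 & h\end{bmatrix}$; your Gram matrix and cross-covariance are exactly those of Eq.~\eqref{eq:euler}, with your $s$ playing the role of the shifted time $t_0$, and, as you note, the identity holds for every finite $s$. One framing point is off, though harmless: no improper-prior limit is needed (or used) in Theorem~\ref{thm:euler}. The paper never places the onset $\tau$ at $t_0$, so $k^1(t_0,t_0)=\ssq t_0^3/3>0$ and the node $\hat x_1=\mu_{\vert x_0}(t_0)=x_0$ follows directly as in Eq.~\eqref{eq:proof-y1}; the $0/0$ obstacle you describe arises only in the degenerate parametrization you yourself introduce, and the limiting device is genuinely required only from second order onward (Theorem~\ref{thm:2nd-order}). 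Where you do add something is the martingale argument: since $\dot x$ is a (scaled) Wiener process, it is Markov and a martingale, so $\Exp[\dot x(r)\g x(t_0),\dot x(t_0)]=\dot x(t_0)$ for $r\ge t_0$ (the conditioning on $x(t_0)$, which is measurable with respect to the past of $\dot x$, is absorbed via the tower property), and integrating over $[t_0,t_0+h]$ gives $x_0+hy_1$ with no linear algebra. This explains structurally why the Euler weights come out exactly for every finite offset rather than only in a limit; the paper's explicit matrix computation is less illuminating on that point but generalizes directly to the higher-order theorems, where the martingale shortcut no longer closes the argument.
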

\begin{proof}
  We show that the corresponding Butcher tableau from Table
  \ref{eq:low-order-rks} holds. After ``observing'' the initial value,
  the second observation $y_1$, constructed by evaluating $f$ at the
  posterior mean at $t_0$, is
\begin{equation}
\label{eq:proof-y1}
  y_1 = f\left( \mu_{\vert x_0}(t_0), t_0 \right)
  = f\left(\frac{k(t_0,t_0)}{k(t_0,t_0)} x_0, t_0\right) = f(x_0, t_0),
\end{equation}
directly from the definitions. The posterior mean after incorporating
$y_1$ is
\begin{align}
  \mu_{\vert x_0, y_1}(t_0 + h) &= \begin{bmatrix}k(t_0 + h,t_0)
    & \kd(t_0 + h, t_0)\end{bmatrix}
  \begin{bmatrix}k(t_0,t_0) & \kd(t_0, t_0)\\
    \kd(t_0, t_0) & \dkd(t_0, t_0) \end{bmatrix}^{-1}
  \begin{pmatrix}x_0\\y_1\end{pmatrix}
  = x_0 + hy_1. \label{eq:euler}
\end{align}
An explicit linear algebraic derivation is available in the
supplements.
\end{proof}

\subsection{Gauss-Markov methods matching all Runge-Kutta methods of
  second order}
\label{sec:2nd-order-methods}

Extending to second order is not as straightforward as integrating the
Wiener process a second time. The theorem below shows that this only
works after moving the onset $-\tau$ of the process towards
infinity. Fortunately, this limit still leads to a proper posterior
probability distribution.

\begin{thm}
\label{thm:2nd-order}
  Consider the twice-integrated Wiener process prior $p(x)=\GP(x;0,k^2)$ with 
  \begin{equation}
    \label{eq:5}
    k^2(t,t') =\iint_\tau ^{\tilde{t},\tilde{t}'} k^1(u,v) du\, dv  = \ssq \left(\frac{\min\nolimits^5(t,t')}{20} 
      +\frac{\abs{t-t'}}{12} \left((t+t')\min\nolimits^3(t,t')
        - \frac{\min\nolimits^4(t,t')}{2}
      \right)\right).
  \end{equation}
  Choosing evaluation nodes at the posterior mean gives rise to the RK
  family of second order methods in the limit of $\tau\to\infty$.
\end{thm}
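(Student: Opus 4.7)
The plan is to mimic the strategy of Theorem~\ref{thm:euler} but with a $2\times 2$ Gram matrix for the intermediate step and a $3\times 3$ Gram matrix for the final prediction, then to expand the resulting rational expressions in the shifted coordinate $a \ce t_0 - \tau$ and take $a\to\infty$ (the ``$\tau\to\infty$'' in the statement is a shorthand for taking the onset $-\tau$ of the process to $-\infty$). First I would note that the first observation is still $y_1=f(x_0,t_0)$ by exactly the argument in Eq.~\eqref{eq:proof-y1}: the posterior mean at $t_0$ given the single observation $x(t_0)=x_0$ equals $x_0$ regardless of which (stationary-increment) kernel we use.

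Next I would compute $\mu_{\mid x_0,y_1}(t_0+\alpha h)$ by instantiating Eq.~\eqref{eq:posterior} with the $2\times 2$ Gram matrix built from $k^2(t_0,t_0)$, $\kd(t_0,t_0)$, $\dk(t_0,t_0)$ and $\dkd(t_0,t_0)$ together with the cross-covariance row $[k^2(t_0+\alpha h,t_0),\,\kd(t_0+\alpha h,t_0)]$. Differentiating Eq.~\eqref{eq:5}, each entry is a polynomial in $a$ with leading orders $a^5,\,a^4,\,a^4,\,a^3$ and $a^5+O(a^4 h),\,a^4+O(a^3 h)$ respectively. Inverting the $2\times 2$ matrix (whose determinant is $\Theta(a^7)$) and multiplying through, the $a^5$-terms cancel against those from $x_0$ and the $a^4$-terms combine with $y_1$, leaving $\mu_{\mid x_0,y_1}(t_0+\alpha h)=x_0+\alpha h\,y_1+O(a^{-1})$. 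This matches $w_{21}=\alpha$ in the Butcher tableau, so in the limit the node is $\hat x_2=x_0+\alpha h\,y_1$, and $y_2=f(\hat x_2,t_0+\alpha h)$.

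Finally I would compute $\mu_{\mid x_0,y_1,y_2}(t_0+h)$ with the $3\times 3$ Gram matrix whose rows and columns are indexed by the observations $x(t_0)$, $\dot x(t_0)$, $\dot x(t_0+\alpha h)$, and read off the weights of $y_1$ and $y_2$. Performing the same $a$-expansion and taking $a\to\infty$, all $a$-dependence should drop out and the resulting weights should be rational functions of $\alpha$ alone; a direct computation (or symbolic check) should give $b_1=1-\frac{1}{2\alpha}$ and $b_2=\frac{1}{2\alpha}$, which is precisely the second-order family in Table~\ref{eq:low-order-rks}.

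The main obstacle will be the $3\times 3$ inversion: the Gram matrix becomes increasingly ill-conditioned as $a\to\infty$ because its entries grow at different polynomial rates, so the finite $\alpha$-dependent limit arises only after several cancellations between the growing numerators and the growing determinant. A cleaner route, which I would pursue in parallel as a sanity check, is to exploit the structural meaning of the limit: as $-\tau\to\infty$, the induced prior on $(x(t_0),\dot x(t_0))$ becomes the improper flat prior while $\ddot x$ remains a proper Wiener process. Conditioning on $x_0$ and $y_1$ then pins the affine part of $x$ exactly, and the posterior on $\dot x$ over $[t_0,t_0+h]$ reduces to a once-integrated Wiener posterior pinned at the two values $y_1$ and $y_2$; integrating its posterior mean from $t_0$ to $t_0+h$ yields the trapezoidal-style weighted sum that matches the Butcher weights. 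Either approach should close the argument.
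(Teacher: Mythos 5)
Your proposal follows essentially the same route as the paper's proof: verify the Butcher-tableau identities directly by instantiating Eq.~\eqref{eq:posterior} with the $2\times 2$ and $3\times 3$ Gram matrices and letting the shifted origin tend to infinity, with the lengthy $3\times 3$ algebra deferred to a symbolic check exactly as the paper defers it to its supplement. Two minor remarks: the $2\times 2$ determinant is $\Theta(a^8)$ (it equals $\sigma^4 t_0^8/960$), not $\Theta(a^7)$, though this slip does not affect the cancellation you describe; and the paper additionally verifies that the final posterior covariance remains finite in the limit, so that the improper prior still yields a proper posterior GP --- a point worth adding since it is part of what the theorem is meant to establish.
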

(The twice-integrated Wiener process is a proper Gauss-Markov process
for all finite values of $\tau$ and $\tilde{t},\tilde{t}'>0$. In the
limit of $\tau\to \infty$, it turns into an improper prior of infinite
local variance.)
\begin{proof}
  The proof is analogous to the previous one. We need to show all
  equations given by the Butcher tableau and choice of parameters hold
  for any choice of $\alpha$. The constraint for $y_1$ holds trivially
  as in Eq.~\eqref{eq:proof-y1}. Because $y_2 = f(x_0 + h\alpha y_1,
  t_0 + h\alpha)$, we need to show $\mu_{\vert x_0, y_1}(t_0 +
  h\alpha) = x_0 + h\alpha y_1$. Therefore, let $\alpha \in (0, 1]$
  arbitrary but fixed:
\begin{align}
\mu_{\vert x_0, y_1}(t_0 + h\alpha) &= \begin{bmatrix}k(t_0 + h,t_0)
    & \kd(t_0 + h, t_0)\end{bmatrix}
  \begin{bmatrix}k(t_0,t_0) & \kd(t_0, t_0)\\
    \dk(t_0, t_0) & \dkd(t_0, t_0) \end{bmatrix}^{-1}
  \begin{pmatrix}x_0\\y_1\end{pmatrix}\notag\\
&= \begin{bmatrix} \frac{t_0^3\left(10(h\alpha)^2+15h\alpha t_0+6t_0^2\right)}{120} 
 & \frac{t_0^2\left(6(h\alpha)^2 + 8h \alpha t_0 + 3t_0^2\right)}{24} \end{bmatrix}
  \begin{bmatrix} \nicefrac{t_0^5}{20} & \nicefrac{t_0^4}{8}\\
    \nicefrac{t_0^4}{8} & \nicefrac{t_0^3}{3} \end{bmatrix}^{-1}
  \begin{pmatrix}x_0\\y_1\end{pmatrix}\notag\\
&=\begin{bmatrix}1 - \frac{10(h\alpha)^2}{3t_0^2} & 
                 h\alpha + \frac{2(h\alpha)^2}{t_0} \end{bmatrix}
  \begin{pmatrix}x_0\\y_1\end{pmatrix}\notag\\
&\xrightarrow[\tau \to \infty]{} x_0 + h\alpha y_1
\end{align}
As $t_0 = \tilde{t}_0 - \tau$, the mismatched terms vanish for $\tau \to
\infty$.  Finally, extending the vector and matrix with one more entry, a
lengthy computation shows that $\lim_{\tau \to \infty} \mu_{\vert x_0, y_1,
  y_2}(t_0 + h) = x_0 + h (1 - \nicefrac{1}{2\alpha}) y_1 +
\nicefrac{h}{2\alpha} y_2$ also holds, analogous to
Eq.~\eqref{eq:euler}. Omitted details can be found in the supplements. They
also include the final-step posterior covariance. Its finite values mean that
this posterior indeed defines a proper GP.
\end{proof}


\subsection{A Gauss-Markov method matching Runge-Kutta methods of third order}
\label{sec:gauss-markov-method}

Moving from second to third order, additionally to the limit towards
an improper prior, also requires a departure from the policy of placing
extrapolation nodes at the posterior mean.

\begin{thm}
  Consider the thrice-integrated Wiener process prior
  $p(x)=\GP(x;0,k^3)$ with
  \begin{equation}
    \label{eq:7}
    \begin{split}
      k^3(t,t') &= \iint_\tau ^{\tilde{t},\tilde{t}'} k^2(u,v) du\, dv\\
      &= \ssq \left(\frac{\min\nolimits^7(t,t')}{252} +
        \frac{\abs{t-t'}\min\nolimits^4(t,t')}{720} \left(
          5\max\nolimits^2(t,t') + 2tt' +
          3\min\nolimits^2(t,t')\right) \right).
    \end{split}
  \end{equation}
  Evaluating twice at the posterior mean and a third time at a
  specific element of the posterior covariance functions' RKHS gives
  rise to the entire family of RK methods of third order, in the limit
  of $\tau\to\infty$.
\end{thm}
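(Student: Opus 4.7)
The plan is to mirror the proofs of Theorems~\ref{thm:euler} and~\ref{thm:2nd-order}: for each stage $i$, I would verify the equation dictated by the $p=3$ Butcher tableau in Table~\ref{eq:low-order-rks} by computing the corresponding posterior-mean (or RKHS) extrapolation under the thrice-integrated Wiener prior $k^3$, then pass to the limit $\tau\to\infty$. Since $k^3$ is one integration away from $k^2$, at each stage the Gram matrices arising here are structurally the same as those in Theorem~\ref{thm:2nd-order}, just one row and column larger.

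The first two stages go through with essentially no change of strategy. The $y_1$ equation is immediate, exactly as in Eq.~\eqref{eq:proof-y1}. For $y_2=f(x_0+hu\,y_1,t_0+hu)$, I would compute $\mu_{\vert x_0,y_1}(t_0+hu)$ from the $2\times2$ Gram matrix of $k^3,\kd,\dk,\dkd$ at $(t_0,t_0)$, and show, precisely as in Theorem~\ref{thm:2nd-order} with $\alpha$ replaced by $u$, that the $t_0$-dependent correction terms vanish as $\tau\to\infty$, leaving $x_0+hu\,y_1$. So evaluating at the current posterior mean reproduces the second Butcher row.

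The third stage is where the argument departs. A direct computation of $\mu_{\vert x_0,y_1,y_2}(t_0+hv)$ followed by $\tau\to\infty$ yields an affine combination of $y_1,y_2$ whose coefficients do \emph{not} in general match the Butcher entries $v-\tfrac{v(v-u)}{u(2-3u)}$ and $\tfrac{v(v-u)}{u(2-3u)}$, so the ``evaluate at the current posterior mean'' policy cannot reach the entire third-order family. To repair this, I would take $\hat{x}_3$ to be the value at $t_0+hv$ of some $w$ in the RKHS of the posterior covariance $k^3_{\vert x_0,y_1,y_2}$ whose evaluation equals the required $x_0+h[v-\tfrac{v(v-u)}{u(2-3u)}]y_1+h\tfrac{v(v-u)}{u(2-3u)}y_2$. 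Since the representers $k^3_{\vert x_0,y_1,y_2}(\cdot,t)$ already span a rich subspace of this RKHS, exhibiting such a $w$ reduces to a one-dimensional calibration problem. Crucially, $\hat{x}_3$ enters the GP only through $y_3=f(\hat{x}_3,t_0+hv)$, which is then conditioned on in the usual way, so the probabilistic interpretation of the final posterior is unaffected. With $y_1,y_2,y_3$ in hand, the last task is the $b$-row: compute $\mu_{\vert x_0,y_1,y_2,y_3}(t_0+h)$ from the $4\times4$ Gram matrix, show that its $\tau\to\infty$ limit equals $x_0+h\sum_{i=1}^{3}b_i y_i$ with the third-order $b_i$ of Table~\ref{eq:low-order-rks}, and, as in Theorem~\ref{thm:2nd-order}, verify that the posterior covariance stays finite in the limit so that the final distribution is a proper GP.

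The main obstacle will be the third stage: making the RKHS construction explicit enough to be algorithmic, and confirming that the required $\hat{x}_3$ really is an RKHS evaluation in the improper $\tau\to\infty$ limit, where several candidate representers individually blow up. A secondary technical nuisance is that the $4\times4$ Gram-matrix inverse will contain several subleading powers of $1/t_0$; tracking these uniformly through the limit, so that no term diverges before the cancellations take place, is where the bookkeeping will be most delicate. The bulk of the algebraic verification itself is routine and belongs in the supplement.
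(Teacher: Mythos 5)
Your proposal follows essentially the same route as the paper's proof: stages one and two at the posterior mean exactly as in Theorem~\ref{thm:2nd-order}, a third node displaced from the posterior mean and realized as an element of the RKHS, and a final verification that the $b$-row weights and the posterior covariance survive the $\tau\to\infty$ limit regardless of how $y_3$ is constructed. The only piece you defer---the explicit correction---the paper computes as $w(v)=\mu(v)+\varepsilon(v)(y_2-y_1)$ with $\varepsilon(v)=\frac{v}{2}\,\frac{3v-2}{3u-2}$, a quadratic in $v$ and hence an RKHS element since $k^3$ is of third or higher order in $v$.
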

\begin{proof}
  The proof progresses entirely analogously as in
  Theorems~\ref{thm:euler} and \ref{thm:2nd-order}, with one exception
  for the term where the mean does not match the RK weights exactly.
  This is the case for $y_3 = x_0 + h[(v -
  \nicefrac{v(v-u)}{u(2-3u)})y_1 + \nicefrac{v(v-u)}{u(2-3u)}y_2]$
  (see Table \ref{eq:low-order-rks}). The weights of $Y$ which give
  the posterior mean at this point are given by $kK^{-1}$
  (cf.~Eq.~\eqref{eq:posterior}, which, in the limit, has value (see
  supp.):
  \begin{align}
    \lim_{\tau \to \infty} &\begin{bmatrix}k(t_0 + hv,t_0) & 
      \kd(t_0 + hv, t_0) & 
      \kd(t_0 + hv, t_0 + hu)
    \end{bmatrix}
    K^{-1}
    \notag\\
    = &\begin{bmatrix} 1 & h(v - \frac{v^2}{2u}) &
      h\frac{v^2}{2u}\end{bmatrix} \notag\\
    = &\begin{bmatrix} 1 & h\left(v - \frac{v(v-u)}{u(2-3u)} \mathbf{-
          \frac{v(3v-2)}{2(3u-2)}}\right) &
      h\left(\frac{v(v-u)}{u(2-3u)} \mathbf{+
          \frac{v(3v-2)}{2(3u-2)}}\right)\end{bmatrix}\notag\\
    = &\begin{bmatrix} 1 & h\left(v - \frac{v(v-u)}{u(2-3u)}\right) &
      h\left(\frac{v(v-u)}{u(2-3u)}\right)\end{bmatrix}
    + \begin{bmatrix} 0 & -h\frac{v(3v-2)}{2(3u-2)} &
      h\frac{v(3v-2)}{2(3u-2)}\end{bmatrix}\label{eq:rkhs-element}
  \end{align}
  This means that the final RK evaluation node does not lie at the
  posterior mean of the regressor. However, it can be produced by
  adding a correction term $w(v) = \mu(v) + \varepsilon(v)(y_2-y_1)$
  where
\begin{equation}
  \label{eq:corr-term}
  \varepsilon(v)=\frac{v}{2} \frac{3v - 2}{3u - 2}  
\end{equation}
is a second-order polynomial in $v$. Since $k$ is of third or higher
order in $v$ (depending on the value of $u$), $w$ can be written as an
element of the thrice integrated Wiener process' RKHS
\cite[\textsection 6.1]{RasmussenWilliams}. Importantly, the final
extrapolation weights $b$ under the limit of the Wiener process prior
again match the RK weights exactly, regardless of how $y_3$ is
constructed.
\end{proof}

We note in passing that Eq.~\eqref{eq:corr-term} vanishes for $v =
\nicefrac{2}{3}$. For this choice, the RK observation $y_2$ is
generated exactly at the posterior mean of the Gaussian
process. Intriguingly, this is also the value for $\alpha$ for which
the posterior variance at $t_0 + h$ is minimized.

\subsection{Choosing the output scale}
\label{sec:choos-outp-scale}

The above theorems have shown that the first three families of Runge-Kutta
methods can be constructed from repeatedly integrated Wiener process priors,
giving a strong argument for the use of such priors in probabilistic numerical
methods. However, requiring this match to a specific Runge-Kutta family in
itself does not yet uniquely identify a particular kernel to be used: The
posterior mean of a Gaussian process arising from noise-free observations is
independent of the output scale (in our notation: $\ssq$) of the covariance
function (this can also be seen by inspecting Eq.~(\ref{eq:posterior})). Thus,
the parameter $\ssq$ can be chosen independent of the other parts of the
algorithm, without breaking the match to Runge-Kutta. Several algorithms using
the observed values of $f$ to choose $\ssq$ without major cost overhead have
been proposed in the regression community before
\cite[e.g.][]{shumway1982approach,ghahramani1996parameter}. For this particular
model an even more basic rule is possible: A simple derivation shows that, in
all three families of methods defined above, the posterior belief over
$\nicefrac{\de^s x}{\de t^s}$ is a Wiener process, and the posterior mean
function over the $s$-th derivative after all $s$ steps is a constant
function. The Gaussian model implies that the expected distance of this
function from the (zero) prior mean should be the marginal standard deviation
$\sqrt{\ssq}$. We choose $\ssq$ such that this property is met, by setting
$\ssq=\left[\nicefrac{\de^s \mu_s(t)}{\de t^s}\right]^2$.

Figure~\ref{fig:theorems} shows conceptual sketches highlighting the
structure of GMRK methods. Interestingly, in both the second- and
third-order families, our proposed priors are improper, so the solver
can not actually return a probability distribution until after the
observation of all $s$ gradients in the RK step.

\paragraph{Some observations}
\label{sec:summary}

We close the main results by highlighting some non-obvious
aspects. First, it is intriguing that higher convergence order results
from repeated integration of Wiener processes. This repeated
integration simultaneously adds to and weakens certain prior
assumptions in the implicit (improper) Wiener prior: $s$-times
integrated Wiener processes have marginal variance $k^s(t,t)\propto
t^{2s+1}$. Since many ODEs (e.g. linear ones) have solution paths of
values $\mathcal{O}(\exp(t))$, it is tempting to wonder whether there
exists a limit process of ``infinitely-often integrated'' Wiener
processes giving natural coverage to this domain (the results on a
linear ODE in Figure \ref{fig:theorems} show how the polynomial
posteriors cannot cover the exponentially diverging true solution). In
this context, it is also noteworthy that $s$-times integrated Wiener
priors incorporate the lower-order results for $s'<s$, so
``highly-integrated'' Wiener kernels can be used to match finite-order
Runge-Kutta methods. Simultaneously, though, sample paths from an
$s$-times integrated Wiener process are almost surely $s$-times
differentiable. So it seems likely that achieving good performance
with a Gauss-Markov-Runge-Kutta solver requires trading off the good
marginal variance coverage of high-order Markov models
(i.e.~repeatedly integrated Wiener processes) against modelling
non-smooth solution paths with lower degrees of integration. We leave
this very interesting question for future work.

\section{Experiments}
\label{sec:experiments}

\begin{figure}
  \centering
  \footnotesize
  \setlength\figurewidth{0.2679\textwidth}
  \setlength\figureheight{0.12\textheight}
  \mbox{\qquad \qquad Na\"ive chaining \qquad \qquad \qquad \qquad \quad Smoothing \quad \qquad \qquad \qquad Probabilistic continuation}
  \mbox{%
    \hspace{-1mm}%
%
%
%
\definecolor{mycolor1}{rgb}{0.49060,0.00000,0.00000}%
\definecolor{mycolor2}{rgb}{0.00000,0.47170,0.46040}%
\definecolor{mycolor3}{rgb}{0.00000,0.00000,0.50900}%
\begin{tikzpicture}

\begin{axis}[%
width=\figurewidth,
height=\figureheight,
axis on top,
scale only axis,
xmin=-0.0100502512562814,
xmax=4.01005025125628,
xtick={0,1,2,3,4},
xticklabels={\empty},
ymin=0.0977386934673367,
ymax=1.00226130653266,
ylabel={$x$},
xlabel near ticks,
ylabel near ticks
]
\addplot [forget plot] graphics [xmin=-0.0100502512562814,xmax=4.01005025125628,ymin=0.0977386934673367,ymax=1.00226130653266] {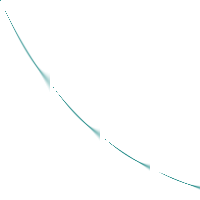};
\addplot [color=gray,solid,forget plot]
  table[row sep=crcr]{figures/2nd-order-naive_noticks-1.tsv};
\addplot [color=gray,solid,forget plot]
  table[row sep=crcr]{figures/2nd-order-naive_noticks-2.tsv};
\addplot [color=gray,solid,forget plot]
  table[row sep=crcr]{figures/2nd-order-naive_noticks-3.tsv};
\addplot [color=mycolor1,solid,forget plot]
  table[row sep=crcr]{figures/2nd-order-naive_noticks-4.tsv};
\addplot [color=white!50!mycolor1,dashed,forget plot]
  table[row sep=crcr]{figures/2nd-order-naive_noticks-5.tsv};
\addplot [color=white!50!mycolor1,dashed,forget plot]
  table[row sep=crcr]{figures/2nd-order-naive_noticks-6.tsv};
\addplot [color=mycolor2,solid,forget plot]
  table[row sep=crcr]{figures/2nd-order-naive_noticks-7.tsv};
\addplot [color=white!50!mycolor2,dashed,forget plot]
  table[row sep=crcr]{figures/2nd-order-naive_noticks-8.tsv};
\addplot [color=white!50!mycolor2,dashed,forget plot]
  table[row sep=crcr]{figures/2nd-order-naive_noticks-9.tsv};
\addplot [color=mycolor2,solid,forget plot]
  table[row sep=crcr]{figures/2nd-order-naive_noticks-10.tsv};
\addplot [color=white!50!mycolor2,dashed,forget plot]
  table[row sep=crcr]{figures/2nd-order-naive_noticks-11.tsv};
\addplot [color=white!50!mycolor2,dashed,forget plot]
  table[row sep=crcr]{figures/2nd-order-naive_noticks-12.tsv};
\addplot [color=mycolor2,solid,forget plot]
  table[row sep=crcr]{figures/2nd-order-naive_noticks-13.tsv};
\addplot [color=white!50!mycolor2,dashed,forget plot]
  table[row sep=crcr]{figures/2nd-order-naive_noticks-14.tsv};
\addplot [color=white!50!mycolor2,dashed,forget plot]
  table[row sep=crcr]{figures/2nd-order-naive_noticks-15.tsv};
\addplot [color=blue,mark size=1.8pt,only marks,mark=*,mark options={solid,fill=mycolor3,draw=mycolor3},forget plot]
  table[row sep=crcr]{figures/2nd-order-naive_noticks-16.tsv};
\addplot [color=blue,mark size=1.8pt,only marks,mark=o,mark options={solid,draw=mycolor3},forget plot]
  table[row sep=crcr]{figures/2nd-order-naive_noticks-17.tsv};
\addplot [color=blue,mark size=1.8pt,only marks,mark=*,mark options={solid,fill=mycolor3,draw=mycolor3},forget plot]
  table[row sep=crcr]{figures/2nd-order-naive_noticks-18.tsv};
\addplot [color=blue,mark size=1.8pt,only marks,mark=o,mark options={solid,draw=mycolor3},forget plot]
  table[row sep=crcr]{figures/2nd-order-naive_noticks-19.tsv};
\addplot [color=blue,mark size=1.8pt,only marks,mark=*,mark options={solid,fill=mycolor3,draw=mycolor3},forget plot]
  table[row sep=crcr]{figures/2nd-order-naive_noticks-20.tsv};
\addplot [color=blue,mark size=1.8pt,only marks,mark=o,mark options={solid,draw=mycolor3},forget plot]
  table[row sep=crcr]{figures/2nd-order-naive_noticks-21.tsv};
\addplot [color=blue,mark size=1.8pt,only marks,mark=*,mark options={solid,fill=mycolor3,draw=mycolor3},forget plot]
  table[row sep=crcr]{figures/2nd-order-naive_noticks-22.tsv};
\addplot [color=blue,mark size=1.8pt,only marks,mark=o,mark options={solid,draw=mycolor3},forget plot]
  table[row sep=crcr]{figures/2nd-order-naive_noticks-23.tsv};
\addplot [color=black,solid,line width=1.0pt,forget plot]
  table[row sep=crcr]{figures/2nd-order-naive_noticks-24.tsv};
\end{axis}
\end{tikzpicture}
\hspace{5.1mm}%
%
%
%
\definecolor{mycolor1}{rgb}{0.00000,0.47170,0.46040}%
\definecolor{mycolor2}{rgb}{0.49060,0.00000,0.00000}%
\definecolor{mycolor3}{rgb}{0.00000,0.00000,0.50900}%
\begin{tikzpicture}

\begin{axis}[%
width=\figurewidth,
height=\figureheight,
axis on top,
scale only axis,
xmin=-0.0100502512562814,
xmax=4.01005025125628,
xtick={0,1,2,3,4},
xticklabels={\empty},
ymin=0.0977386934673367,
ymax=1.00226130653266,
ytick={0.1,0.2,0.3,0.4,0.5,0.6,0.7,0.8,0.9,1},
yticklabels={\empty},
xlabel near ticks,
ylabel near ticks
]
\addplot [forget plot] graphics [xmin=-0.0100502512562814,xmax=4.01005025125628,ymin=0.0977386934673367,ymax=1.00226130653266] {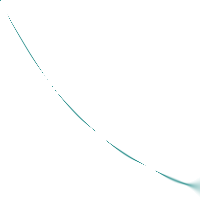};
\addplot [color=gray,solid,forget plot]
  table[row sep=crcr]{figures/2nd-order-smoothing_noticks-1.tsv};
\addplot [color=gray,solid,forget plot]
  table[row sep=crcr]{figures/2nd-order-smoothing_noticks-2.tsv};
\addplot [color=gray,solid,forget plot]
  table[row sep=crcr]{figures/2nd-order-smoothing_noticks-3.tsv};
\addplot [color=mycolor1,solid,forget plot]
  table[row sep=crcr]{figures/2nd-order-smoothing_noticks-4.tsv};
\addplot [color=white!50!mycolor1,dashed,forget plot]
  table[row sep=crcr]{figures/2nd-order-smoothing_noticks-5.tsv};
\addplot [color=white!50!mycolor1,dashed,forget plot]
  table[row sep=crcr]{figures/2nd-order-smoothing_noticks-6.tsv};
\addplot [color=mycolor1,solid,forget plot]
  table[row sep=crcr]{figures/2nd-order-smoothing_noticks-7.tsv};
\addplot [color=white!50!mycolor1,dashed,forget plot]
  table[row sep=crcr]{figures/2nd-order-smoothing_noticks-8.tsv};
\addplot [color=white!50!mycolor1,dashed,forget plot]
  table[row sep=crcr]{figures/2nd-order-smoothing_noticks-9.tsv};
\addplot [color=mycolor2,solid,forget plot]
  table[row sep=crcr]{figures/2nd-order-smoothing_noticks-10.tsv};
\addplot [color=white!50!mycolor2,dashed,forget plot]
  table[row sep=crcr]{figures/2nd-order-smoothing_noticks-11.tsv};
\addplot [color=white!50!mycolor2,dashed,forget plot]
  table[row sep=crcr]{figures/2nd-order-smoothing_noticks-12.tsv};
\addplot [color=mycolor1,solid,forget plot]
  table[row sep=crcr]{figures/2nd-order-smoothing_noticks-13.tsv};
\addplot [color=white!50!mycolor1,dashed,forget plot]
  table[row sep=crcr]{figures/2nd-order-smoothing_noticks-14.tsv};
\addplot [color=white!50!mycolor1,dashed,forget plot]
  table[row sep=crcr]{figures/2nd-order-smoothing_noticks-15.tsv};
\addplot [color=blue,mark size=1.8pt,only marks,mark=*,mark options={solid,fill=mycolor3,draw=mycolor3},forget plot]
  table[row sep=crcr]{figures/2nd-order-smoothing_noticks-16.tsv};
\addplot [color=blue,mark size=1.8pt,only marks,mark=o,mark options={solid,draw=mycolor3},forget plot]
  table[row sep=crcr]{figures/2nd-order-smoothing_noticks-17.tsv};
\addplot [color=blue,mark size=1.8pt,only marks,mark=*,mark options={solid,fill=mycolor3,draw=mycolor3},forget plot]
  table[row sep=crcr]{figures/2nd-order-smoothing_noticks-18.tsv};
\addplot [color=blue,mark size=1.8pt,only marks,mark=o,mark options={solid,draw=mycolor3},forget plot]
  table[row sep=crcr]{figures/2nd-order-smoothing_noticks-19.tsv};
\addplot [color=blue,mark size=1.8pt,only marks,mark=*,mark options={solid,fill=mycolor3,draw=mycolor3},forget plot]
  table[row sep=crcr]{figures/2nd-order-smoothing_noticks-20.tsv};
\addplot [color=blue,mark size=1.8pt,only marks,mark=o,mark options={solid,draw=mycolor3},forget plot]
  table[row sep=crcr]{figures/2nd-order-smoothing_noticks-21.tsv};
\addplot [color=blue,mark size=1.8pt,only marks,mark=*,mark options={solid,fill=mycolor3,draw=mycolor3},forget plot]
  table[row sep=crcr]{figures/2nd-order-smoothing_noticks-22.tsv};
\addplot [color=blue,mark size=1.8pt,only marks,mark=o,mark options={solid,draw=mycolor3},forget plot]
  table[row sep=crcr]{figures/2nd-order-smoothing_noticks-23.tsv};
\addplot [color=black,solid,line width=1.0pt,forget plot]
  table[row sep=crcr]{figures/2nd-order-smoothing_noticks-24.tsv};
\end{axis}
\end{tikzpicture}
\hspace{4.8mm}%
%
%
%
\definecolor{mycolor1}{rgb}{0.00000,0.47170,0.46040}%
\definecolor{mycolor2}{rgb}{0.49060,0.00000,0.00000}%
\definecolor{mycolor3}{rgb}{0.00000,0.00000,0.50900}%
\begin{tikzpicture}

\begin{axis}[%
width=\figurewidth,
height=\figureheight,
axis on top,
scale only axis,
xmin=-0.0100502512562814,
xmax=4.01005025125628,
xtick={0,1,2,3,4},
xticklabels={\empty},
ymin=0.0977386934673367,
ymax=1.00226130653266,
ytick={0.1,0.2,0.3,0.4,0.5,0.6,0.7,0.8,0.9,1},
yticklabels={\empty},
xlabel near ticks,
ylabel near ticks
]
\addplot [forget plot] graphics [xmin=-0.0100502512562814,xmax=4.01005025125628,ymin=0.0977386934673367,ymax=1.00226130653266] {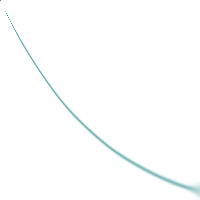};
\addplot [color=gray,solid,forget plot]
  table[row sep=crcr]{figures/2nd-order-continuation_noticks-1.tsv};
\addplot [color=gray,solid,forget plot]
  table[row sep=crcr]{figures/2nd-order-continuation_noticks-2.tsv};
\addplot [color=gray,solid,forget plot]
  table[row sep=crcr]{figures/2nd-order-continuation_noticks-3.tsv};
\addplot [color=mycolor1,solid,forget plot]
  table[row sep=crcr]{figures/2nd-order-continuation_noticks-4.tsv};
\addplot [color=white!50!mycolor1,dashed,forget plot]
  table[row sep=crcr]{figures/2nd-order-continuation_noticks-5.tsv};
\addplot [color=white!50!mycolor1,dashed,forget plot]
  table[row sep=crcr]{figures/2nd-order-continuation_noticks-6.tsv};
\addplot [color=mycolor1,solid,forget plot]
  table[row sep=crcr]{figures/2nd-order-continuation_noticks-7.tsv};
\addplot [color=white!50!mycolor1,dashed,forget plot]
  table[row sep=crcr]{figures/2nd-order-continuation_noticks-8.tsv};
\addplot [color=white!50!mycolor1,dashed,forget plot]
  table[row sep=crcr]{figures/2nd-order-continuation_noticks-9.tsv};
\addplot [color=mycolor2,solid,forget plot]
  table[row sep=crcr]{figures/2nd-order-continuation_noticks-10.tsv};
\addplot [color=white!50!mycolor2,dashed,forget plot]
  table[row sep=crcr]{figures/2nd-order-continuation_noticks-11.tsv};
\addplot [color=white!50!mycolor2,dashed,forget plot]
  table[row sep=crcr]{figures/2nd-order-continuation_noticks-12.tsv};
\addplot [color=mycolor1,solid,forget plot]
  table[row sep=crcr]{figures/2nd-order-continuation_noticks-13.tsv};
\addplot [color=white!50!mycolor1,dashed,forget plot]
  table[row sep=crcr]{figures/2nd-order-continuation_noticks-14.tsv};
\addplot [color=white!50!mycolor1,dashed,forget plot]
  table[row sep=crcr]{figures/2nd-order-continuation_noticks-15.tsv};
\addplot [color=blue,mark size=1.8pt,only marks,mark=o,mark options={solid,draw=mycolor3},forget plot]
  table[row sep=crcr]{figures/2nd-order-continuation_noticks-16.tsv};
\addplot [color=blue,mark size=1.8pt,only marks,mark=o,mark options={solid,draw=mycolor3},forget plot]
  table[row sep=crcr]{figures/2nd-order-continuation_noticks-17.tsv};
\addplot [color=blue,mark size=1.8pt,only marks,mark=o,mark options={solid,draw=mycolor3},forget plot]
  table[row sep=crcr]{figures/2nd-order-continuation_noticks-18.tsv};
\addplot [color=blue,mark size=1.8pt,only marks,mark=o,mark options={solid,draw=mycolor3},forget plot]
  table[row sep=crcr]{figures/2nd-order-continuation_noticks-19.tsv};
\addplot [color=blue,mark size=1.8pt,only marks,mark=o,mark options={solid,draw=mycolor3},forget plot]
  table[row sep=crcr]{figures/2nd-order-continuation_noticks-20.tsv};
\addplot [color=blue,mark size=1.8pt,only marks,mark=o,mark options={solid,draw=mycolor3},forget plot]
  table[row sep=crcr]{figures/2nd-order-continuation_noticks-21.tsv};
\addplot [color=blue,mark size=1.8pt,only marks,mark=o,mark options={solid,draw=mycolor3},forget plot]
  table[row sep=crcr]{figures/2nd-order-continuation_noticks-22.tsv};
\addplot [color=blue,mark size=1.8pt,only marks,mark=*,mark options={solid,fill=mycolor3,draw=mycolor3},forget plot]
  table[row sep=crcr]{figures/2nd-order-continuation_noticks-23.tsv};
\addplot [color=black,solid,line width=1.0pt,forget plot]
  table[row sep=crcr]{figures/2nd-order-continuation_noticks-24.tsv};
\end{axis}
\end{tikzpicture}
\hspace{3mm}%
  }\\\vspace{-1mm}
  \setlength\figureheight{0.12\textheight}
  \mbox{%
  \input{figures/dave/2nd-order-naive-zoom.tikz}%
  \input{figures/dave/2nd-order-smoothing-zoom.tikz}%
  \input{figures/dave/2nd-order-continuation-zoom.tikz}%
  }
  \caption{Options for the continuation of GMRK methods after the
    first extrapolation step (red). All plots use the midpoint method
    and $h=1$. Posterior after two steps (same for all three options)
    in red (mean, $\pm 2$ standard deviations). Extrapolation after 2,
    3, 4 steps (gray vertical lines) in green. Final probabilistic
    prediction as green shading. True solution to (linear) ODE in
    black. Observations of $x$ and $\dot x$ marked by solid and empty
    blue circles, respectively.  Bottom row shows the same data, plotted relative to true solution, at higher
    y-resolution.}
  \label{fig:2nd-order}
\end{figure}

Since Runge-Kutta methods have been extensively studied for over a
century \citep{hairer87:_solvin_ordin_differ_equat_i}, it is not
necessary to evaluate their estimation performance again. Instead, we
focus on an open conceptual question for the further development of
probabilistic Runge-Kutta methods: If we accept high convergence order
as a prerequisite to choose a probabilistic model, how should
probabilistic ODE solvers continue \emph{after} the first $s$ steps?
Purely from an inference perspective, it seems unnatural to introduce
new evaluations of $x$ (as opposed to $\dot x$) at $t_0+nh$ for
$n=1,2,\dots$. Also, with the exception of the Euler case, the
posterior covariance after $s$ evaluations is of such a form that its
renewed use in the next interval will not give Runge-Kutta
estimates. Three options suggest themselves:

\paragraph{Na\"ive Chaining}
\label{sec:naive-chaining}

One could simply re-start the algorithm several times as if the
previous step had created a novel IVP. This amounts to the classic RK
setup. However, it does not produce a joint ``global'' posterior
probability distribution (Figure~\ref{fig:2nd-order}, left column).

\paragraph{Smoothing}
\label{sec:smoothing}

An ad-hoc remedy is to run the algorithm in the ``Na\"ive chaining''
mode above, producing $N\times s$ gradient observations and $N$
function evaluations, but then compute a joint posterior distribution
by using the first $s$ gradient observations and $1$ function
evaluation as described in Section \ref{sec:rel-gp-rk}, then using the
remaining $s(N-1)$ gradients and $(N-1)$ function values as in
standard GP inference.  The appeal of this approach is that it
produces a GP posterior whose mean goes through the RK points
(Figure~\ref{fig:2nd-order}, center column).  But from a probabilistic
standpoint it seems contrived.  In particular, it produces a very
confident posterior covariance, which does not capture global error.

\paragraph{Continuing after $s$ evaluations}
\label{sec:continuing-after-s}

Perhaps most natural from the probabilistic viewpoint is to break with
the RK framework after the first RK step, and simply continue to
collect gradient observations---either at RK locations, or anywhere
else. The strength of this choice is that it produces a continuously
growing marginal variance (Figure~\ref{fig:2nd-order}, right). One may
perceive the departure from the established RK paradigm as
problematic. However, we note again that the core theoretical argument
for RK methods is only strictly valid in the first step, the argument
for iterative continuation is a lot weaker.

Figure~\ref{fig:2nd-order} shows exemplary results for these three
approaches on the (stiff) linear IVP
$\dot{x}(t)=-\nicefrac{1}{2}x(t)$, $x(0)=1$. Na\"ive chaining does not
lead to a globally consistent probability distribution. Smoothing does
give this global distribution, but the ``observations'' of function
values create unnatural nodes of certainty in the posterior. The
probabilistically most appealing mode of continuing inference directly
offers a naturally increasing estimate of global error. At least for
this simple test case, it also happens to work better in practice
(note good match to ground truth in the plots). We have found similar
results for other test cases, notably also for non-stiff linear
differential equations. But of course, probabilistic continuation
breaks with at least the traditional mode of operation for Runge-Kutta
methods, so a closer theoretical evaluation is necessary, which we are
planning for a follow-up publication.

\paragraph{Comparison to Square-Exponential kernel}
\label{sec:comparison}

Since all theoretical guarantees are given in forms of upper bounds
for the RK methods, the application of different GP models might still
be favorable in practice. We compared the continuation method from
Fig.~\ref{fig:2nd-order} (right column) to the ad-hoc choice of a
square-exponential (SE) kernel model, which was used by
\citet{HennigAISTATS2014} (Fig.~\ref{fig:comparison}). For this test
case, the GMRK method surpasses the SE-kernel algorithm both in
accuracy and calibration: its mean is closer to the true solution than
the SE method, and its error bar covers the true solution, while the
SE method is over-confident. This advantage in calibration is likely
due to the more natural choice of the output scale $\ssq$ in the GMRK
framework.

\begin{figure}
  \centering
  \footnotesize
  \setlength\figurewidth{0.9\textwidth}
  \setlength\figureheight{0.12\textheight}%
  \input{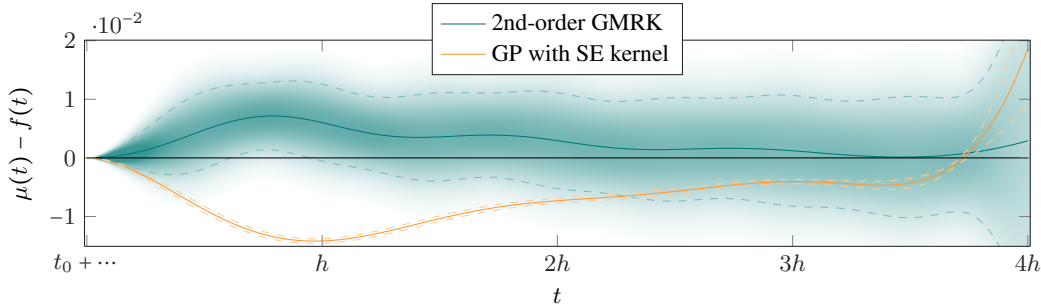}%
\caption{Comparison of a 2nd order GMRK method and the method from 
  \cite{HennigAISTATS2014}. Shown is error and posterior uncertainty
  of GMRK (green) and SE kernel (orange).
  Dashed lines are $+2$ standard deviations.
The SE method shown used the best out of several evaluated parameter choices.
}
  \label{fig:comparison}
\end{figure}


\section{Conclusions}

We derived an interpretation of Runge-Kutta methods in terms of the
limit of Gaussian process regression with integrated Wiener covariance
functions, and a structured but nontrivial extrapolation model. The
result is a class of probabilistic numerical methods returning
Gaussian process posterior distributions whose means can match
Runge-Kutta estimates exactly.

This class of methods has practical value, particularly to machine
learning, where previous work has shown that the probability
distribution returned by GP ODE solvers adds important functionality
over those of point estimators. But these results also raise pressing
open questions about probabilistic ODE solvers. This includes the
question of how the GP interpretation of RK methods can be extended
beyond the 3rd order, and how ODE solvers should proceed after the
first stage of evaluations.


\subsubsection*{Acknowledgments}
The authors are grateful to Simo S\"arkk\"a for a helpful discussion.

\printbibliography[heading=subbibliography]

\clearpage

\begin{center}
{\Large \bf --- Supplementary Material ---}
\end{center}

This document contains derivation steps omitted in the main paper.
Additionally, the website\footnote{\url{http://probabilistic-numerics.org/ODEs.html}}
to this publication contains \matlab{} Symbolic Math Toolbox code which was used
to obtain the lengthy derivations.

\appendix

\section{Multivariate extension}
\label{sec:multi-d-odes}

The GMRK model can be extended to the multivariate case analogously to
Runge-Kutta methods.  All equations in the RK framework also work with
vector-valued function values, and all derivations presented in the
paper and in this supplement carry over to the non-scalar case without
modification: Consider dimension $j \in \{1, \dotsc, N\}$. The
projected outputs are the same as if $j$ were an independent
one-dimensional problem, which can be modeled with a separate Gaussian
process. For a joint notation, vectorize the $N$ dimensions with a
Kronecker product: if $k(t,t')$ is an one-dimensional covariance
function, the function
\begin{equation}
\bar{k}(\vec{t},\vec{t'}) = \mat{D}_{ij} k(t_i,t'_j),
\end{equation}
where $\mat{D}$ is an $N\times N$ positive semi-definite matrix,
defines a covariance function over $N$ dimensions, if $\vec{t}$ and
$\vec{t'}$ are $N$ dimensional. Choosing $\mat{D} = \Id$ results in a
$N$-dimensional GP model where output dimensions are independent of
each other as required.

\section{Covariance functions of integrated Wiener processes}
\label{sec:cov-iwp}
It was observed that integrated Wiener processes generate RK methods of various
order with higher number of integrations leading to RK methods of higher order.

Here we present the derivation of the covariance functions of the applied
Wiener process kernels as well as their derivatives as needed.

\subsection{The once integrated Wiener process}
\label{sec:cov-wp1}
The \emph{Wiener process} covariance function is given by
\begin{equation}
k_{WP}(t,t') = \ssq \min(t,t')\label{eq:wiener-kernel}
\end{equation}
It is only defined for $t,t' > 0$.
Integration with respect to both arguments leads to the
\emph{once integrated Wiener process} which is once differentiable.
Its covariance function is
\begin{align}
k_1(t,t') &= \int_0^t du \int_0^{t'} dv\; \ssq \min(u,v)\notag\\
&= \ssq \int_0^t du \int_0^{t'} dv\; \min(u,v)\notag\\
&\overset{\mathclap{t>t'}}{=} \ssq \left(\int_{t'}^t du \int_0^{t'} dv\; v 
                      + 2 \int_0^{t'} du \int_0^{u} dv\; v\right)\notag\\
&= \ssq \left( \int_{t'}^t du\; \frac{1}{2}t'^2 
          + 2 \int_0^{t'} du\; \frac{1}{2} u^2 \right)\notag\\
&= \ssq \left( \frac{1}{2}(t-t')t'^2 + 
          \frac{1}{3}(t'^3)\right)\notag\\
&= \ssq \left( \frac{\min\nolimits^3(t,t')}{3} +
            \abs{t - t'} \frac{\min\nolimits^2(t,t')}{2}\right)\label{eq:wp1}
\end{align}
where $t,t'$ were replaced with $\min(t,t')$ and $\max(t,t')$ at the last step.

The necessary derivatives of this kernel are
\begin{align}
\kd(t,t') &= \ssq \begin{cases}  t < t': &\frac{t^2}{2} \\
             t > t': &(tt' - \frac{t'^2}{2}) \end{cases} \\
\dkd(t,t') &= \ssq \min(t,t')\qqqq = k_{WP}(t,t').
\end{align}

\subsection{The twice integrated Wiener process}
\label{sec:cov-wp2}
Iterating this process leads to the \emph{twice integrated Wiener
  process}. Its covariance function is
\begin{align}
k_2(t,t') &= \ssq \left( \int_0^t du \int_0^{t'} dv\;
  \frac{\min\nolimits^3(u,v)}{3} +
  \abs{u-v}\frac{\min\nolimits^2(u,v)}{2}\right)\notag\\
  &\begin{aligned}\overset{\mathclap{t>t'}}{=} \ssq \left(
     \left( \int_{t'}^t du \int_0^{t'} dv\; (u-v)
     \frac{v^2}{2} + \frac{v^3}{3} \right)\right. &+ 
     \left( \int_0^{t'} du \int_0^u dv\; (u-v) \frac{v^2}{2} +
     \frac{v^3}{3} \right) \\   &+\left.
     \left( \int_0^v du \int_0^{t'} dv\; (v-u) \frac{u^2}{2} +
     \frac{u^3}{3} \right)\right)
    \end{aligned}
  \notag\\
  &\vdots\notag\\
  &= \ssq \left(\frac{\min\nolimits^5(t,t')}{20} 
            +\frac{\abs{t-t'}}{12} \left((t+t')\min\nolimits^3(t,t')
                                        - \frac{\min\nolimits^4(t,t')}{2}
                                   \right)\right)\label{eq:wp2}
\end{align}

Derivatives of this kernel are
\begin{align}
\kd(t,t') &= \ssq
  \begin{cases}
    t > t': &\left(\frac{{t'}^2}{24}({t'}^2- 4 tt' + 6t^2)\right) \\
    t \leq t': &\left(-\frac{t^4}{24} + \frac{t't^3}{6}\right)
  \end{cases}\\
\dkd (t,t') &= \ssq \left(\frac{\min\nolimits^3(t,t')}{3} +
  \abs{t-t'}\frac{\min\nolimits^2(t,t')}{2}\right) \qq = k_1(t,t').
\end{align}

\subsection{The thrice integrated Wiener process}
\label{sec:cov-wp3}
Similarly, the \emph{thrice integrated Wiener process} is obtained by
\begin{align}
k_3(t,t') &= \ssq \left( \int_0^t du \int_0^{t'} dv\;
    \frac{\min\nolimits^5(u,v)}{20} 
         +\frac{\abs{u-v}}{12} \left((u+v)\min\nolimits^3(u,v)
                                     - \frac{\min\nolimits^4(u,v)}{2}
                                   \right)\right)\notag\\
  &\vdots\notag\\
  &= \ssq \left(\frac{\min\nolimits^7(t,t')}{252}
           + \frac{\abs{t-t'}\min\nolimits^4(t,t')}{720}
            \left( 5\max\nolimits^2(t,t') + 2tt' + 3\min\nolimits^2(t,t')\right)
       \right)\label{eq:wp3}
\end{align}
Omitted steps are similar as in the derivation of \eqref{eq:wp1}
and \eqref{eq:wp2}.

Its derivatives are given by
\begin{align}
\kd(t,t') &= \ssq
  \begin{cases}
    t > t': &\left(\frac{t'^3}{720}\left(20t^3 - 15t^2t' + 6tt'^2 - t'^3\right)\right) \\
    t \leq t': &\left(\frac{t^4}{720}\left(15t'^2 -6tt' + t^2\right)\right)
  \end{cases} \\
\dkd (t,t') &= \ssq\left( \frac{\min\nolimits^5(t,t')}{20} 
            +\frac{\abs{t-t'}}{12} \left((t+t')\min\nolimits^3(t,t')
                                        - \frac{\min\nolimits^4(t,t')}{2}
                                   \right)\right)\q = k_2(t,t').
\end{align}

\section{Posterior predictive GP distributions}
\label{sec:explicit-posteriors}

In order to build GMRK methods it is necessary to compute closed forms of the
resulting posterior mean and covariance functions after $s$ evaluations. Forms
are given below. In cases where a derivation is omitted, results were obtained
with \matlab{}'s Symbolic Math Toolbox. Code is available online.

\subsection{Posterior predictive mean and covariance functions of the once integrated WP}
\label{sec:WP1-explicit}

Below are the formulas of the posterior mean and covariance of the once
integrated WP after each step.

\begin{align}
\mu_{\vert x_0} (t) &= \frac{k(t,t_0)}{k(t_0,t_0)} x_0\notag\\
  &= \frac{\nicefrac{t_0^3}{3} + \abs{t - t_0}\nicefrac{t_0^2}{2}}
          {\nicefrac{t_0^3}{3}} x_0\notag\\
  &= \left(1 + \abs{t-t_0}\frac{3}{2t_0}\right) x_0\label{eq:wp1-mu-post1}\\
\mu_{\vert x_0, y_1} (t) &= \begin{bmatrix}k(t,t_0) &\kd(t,t_0)\end{bmatrix}
                \underbrace{\begin{bmatrix}k(t_0,t_0) & \kd(t_0,t_0)\\
                                           \dk(t_0,t_0)&\dkd(t_0,t_0)\end{bmatrix}^{-1}}_{\ec K}
                            \begin{pmatrix} x_0\\y_1\end{pmatrix}\notag\\
 &= \begin{cases}
      t \geq t_0: &
        \frac{1}{\abs{K}}\begin{bmatrix}\nicefrac{t^2t_0}{2} - \nicefrac{t^3}{6} &
                                        \nicefrac{t^2}{2}\end{bmatrix}
                         \begin{bmatrix}t_0 & -\nicefrac{t_0^2}{2}\\-\nicefrac{t_0^2}{2} & \nicefrac{t_0^3}{3}\end{bmatrix}
                         \begin{pmatrix} x_0\\y_1\end{pmatrix}\\
      t < t_0: &
        \frac{1}{\abs{K}}\begin{bmatrix}\nicefrac{tt_0^2}{2} - \nicefrac{t_0^3}{6} &
                                        tt_0 - \nicefrac{t_0^2}{2}\end{bmatrix}
                         \begin{bmatrix}t_0 & -\nicefrac{t_0^2}{2}\\-\nicefrac{t_0^2}{2} & \nicefrac{t_0^3}{3}\end{bmatrix}
                         \begin{pmatrix} x_0\\y_1\end{pmatrix}
    \end{cases}\notag\\
  &\vdots\notag\\
 &= \begin{cases}
      t \geq t_0: & x_0 + (t-t_0)y_1\\
      t < t_0: & \frac{3t^2t_0-2t^3}{t_0^3}x_0 -\frac{t^2t_0 - t^3}{t_0^2}y_1
    \end{cases}\label{eq:wp1-mu-post2}
\end{align}
Eqs.~\eqref{eq:wp1-mu-post1} and \eqref{eq:wp1-mu-post2} also complete the proof
of Theorem 1 by observing that $y_1 = f(x_0, t_0) = f(\mu_{\vert x_0}(t_0), t_0)$
and $x_1 = x_0 + h y_1 = \mu_{\vert x_0, y_1} (t_0 + h)$, which match Euler's method.

Without loss of generality, we can assume that $t' \leq t$. With this convention
the posterior covariances functions are
\begin{align}
k_{\vert x_0} (t,t') &= k^1(t,t') - \frac{k^1(t,t_0)k^1(t_0,t')}{k^1(t_0,t_0)}\notag\\
  &\begin{aligned}=
    &\left( \frac{\min\nolimits^3(t,t')}{3} +
            \abs{t - t'} \frac{\min\nolimits^2(t,t')}{2}\right)\\
    &- \frac{1}{24t_0^3}\left(\min\nolimits^2(t,t_0)\min\nolimits^2(t_0,t')(t+t_0+2\abs{t-t_0})(t'+t_0+2\abs{t'-t_0})\right)
    \end{aligned}\\
k_{\vert x_0, y_1} (t,t') &= k^1(t,t') 
                           - \begin{bmatrix}k(t,t_0) &\kd(t,t_0)\end{bmatrix} K^{-1}
                             \begin{bmatrix}k(t_0,t') \\ \dk(t_0,t')\end{bmatrix}\notag\\
  &\vdots\notag\\
  &= \begin{cases}
     t,t' > t_0: & \frac{(t_0 - t')^2(3t-t'-2t_0)}{6}\\
     t > t_0 \geq t': &0\\
     t,t' \leq t_0: &\frac{t'^2(t_0 - t)^2(3tt_0-t't_0-2tt')}{6t_0^3}\\
     \end{cases}
\end{align}

\subsection{Predictive mean and covariance of the twice integrated WP}
\label{sec:WP2-explicit}

Below are the formulas for the posterior mean for the twice integrated WP and
the generic 2-stage RK method. Throughout, we write $\mu(t) = \mu(t_0 + s)$ with
appropriate $s \in \Re$, which will simplify formulas significantly.
Furthermore, we omit stating the generating formulas and intermediate steps as
the former are analogous to the ones in Sec.~\ref{sec:WP1-explicit} and the
latter were performed with \matlab{}'s Symbolic Math Toolbox.
\begin{align}
\mu_{\vert x_0} (t_0 + s) &=
  \left[1 + \frac{5s}{2t_0} + \frac{5s^2}{3t_0^2}
                  + \ind{(-\infty,s)}\left(\frac{s^5}{6t_0^5}\right)\right] x_0 \\
\mu_{\vert x_0, y_1} (t_0 + s) &=\begin{aligned}
  &\left[1 - \frac{10s^2}{3t_0^2} 
           + \ind{(-\infty,s)}\left(\frac{5s^4}{t_0^4} + \frac{8s^5}{3t_0^5}\right)\right] x_0\\
+ &\left[s + \frac{2s^2}{t_0}
        - \ind{(-\infty,s)}\left(\frac{2s^4}{t_0^3} + \frac{s^5}{t_0^4}\right)\right] y_0
  \end{aligned}\\
\lim_{t_0 \to \infty} \mu_{\vert x_0, y_1, y_2} (t_0 + s) &=
  x_0 + \left(s - \frac{s^2}{2h\alpha}\right)y_1 + \frac{s^2}{2h\alpha}y_2
\end{align}

As was the case for the posterior mean, we will write the posterior covariance
function as $k(t,t') = k(t_0 + s, t_0 + s')$ while also assuming w.l.o.g.
that $s' \leq s$. The posterior covariance functions are then given by:
\begin{align}
k_{\vert x_0} (t_0 + s, t_0 + s')= 
&\frac{ss'}{48}t_0^3 + \left(ss'^2 + s^2s'\right)\frac{t_0^2}{24}+ \frac{s^2s'^2}{9}t_0\notag\\
+ &\left(\frac{\abs{s}^5}{240} + \frac{s^2s'^3}{24} + \frac{s^3s'^2}{24}
    - \frac{ss'^4}{48} - \frac{s^4s'}{48} + \frac{\abs{s'}^5}{240} - \frac{\abs{s' - s}^5}{240}\right)t_0^0\notag\\
- &\left(ss'^5 + s^5s' - s\abs{s'}^5 - s'\abs{s}^5\right)\frac{t_0^{-1}}{96}
-  \left(s^2s'^5 + s^5s'^2 - s^2\abs{s'}^5 - s'^2\abs{s}^5\right)\frac{t_0^{-2}}{144}\notag\\
- &\left(s^5 - \abs{s}^5\right)\left(s'^5 - \abs{s'}^5\right)\frac{t_0^{-5}}{2880}\\
&\xrightarrow[t_0 \to \infty]{}\infty\notag
\end{align}
\begin{align}
k_{\vert x_0, y_1} (t_0 + s,t_0 + s') = &
\begin{cases}
s, s' > 0: & \frac{s'^5}{240} + \frac{s^2s'^3}{24} + \frac{s^3s'^2}{24} - \frac{\abs{s' - s}^5}{240} + \frac{s^5}{240} - \frac{ss'^4}{48} - \frac{s^4s'}{48} + \frac{s^2s'^2}{36}t_0\\
s > 0 \geq s': &\frac{s^2s'^2(s' + t_{0})^3}{36{t_{0}}^2}\\
s, s' \leq 0: &\frac{s^2s'^3}{24} - \frac{s'^5}{240} + \frac{s^3s'^2}{24} 
- \frac{\abs{s' - s}^5}{240} 
- \frac{s^5}{240} + \frac{ss'^4}{48} + \frac{s^4s'}{48}\\
&+ \frac{s^2s'^2}{36}t_0 + \frac{s^2s'^2(s^2 + s'^2)}{12t_0} + \frac{s^2s'^2(s^3 + s'^3)}{36t_0^2}
- \frac{s^4s'^4(s' + s)}{24t_0^4}\\
&- \frac{s^4s'^4}{12{t_{0}}^3} - \frac{s^5s'^5}{45{t_{0}}^5}
\end{cases}\\
&\xrightarrow[t_0 \to \infty]{}\infty\notag
\end{align}

\newcommand{\ha}{(h\alpha)}

For the final posterior covariance, it is also necessary to distinguish between
the cases whether $s,s' \geq h\alpha$, $s \geq h\alpha > s'$
and $s, s' \leq h\alpha$.
\begin{align}
k_{\vert x_0, y_1, y_2} (t_0 + s,t_0 + s') = &
\begin{cases}
s, s' > \ha: &
\left[\left(8s'^5- 40ss'^4 + 80(s^2s'^3 + \ha^2(s^2s' + ss'^2))\right.\right.\\
&\left.\left. - 20\ha^3(s^2 +s'^2) - 160\ha s^2s'^2\right)t_0\right.\\
&- 15\ha^6 + 60\ha^5(s + s') - 90\ha^4(s^2 + s'^2)\\
& + 24\ha s'^5 + 360\ha^3(ss'^2 + s^2s')\\
& - 120\ha ss'^4 - 540\ha^2s^2s'^2 + 240\ha s^2s'^3\\
&\left. - 240\ha^4ss'\right]\left[960t_{0} + 2880h\alpha\right]^{-1}\\
s > \ha \geq s' > 0: &\left[\left( 20(s^2s'^4 - (h\alpha)^4s'^2) + 80(h\alpha)^3ss'^2+ 8(h\alpha)s'^5\right.\right.\\
&\left.-40((h\alpha)^2s^2s'^2 + (h\alpha)ss'^4)\right)t_0+24(h\alpha)^2s'^5 \\
&+ 15(h\alpha)^3s'^4 - 60(h\alpha)^4s'^3- 180(h\alpha)^2ss'^4\\
&\left.+ 240(h\alpha)^3ss'^3 - 120(h\alpha)^2s^2s'^3 + 90(h\alpha)s^2s'^4\right]\\
& \left[960t_0h\alpha + 2880(h\alpha)^2\right]^{-1}\\
s > \ha > 0 \geq s': &\left[\ha s'^2 (s'+t_0)^3\left(4\ha s -2s^2 - \ha^2\right)\right]\\
&\left[48t_0^2(t_0 + 3\ha) \right]^{-1}\\
\ha \geq s,s' > 0: &\left[\left(80\ha^2s^2s'^2(\ha-s) - 40\ha^2ss'^4\right.\right.\\
&\left.+8\ha^2s'^5 + 20\ha s^2s'^2(s^2+s'^2)\right)t_0\\
&- 15s^4s'^4 + 24\ha^3s'^5 - 120\ha^3ss'^4\\
&+240\ha^2s^2s'^3(\ha-s)\\
&\left. + 60\ha s^3s'^3(s + s') \right]\\
&\left[960t_0\ha^2 + 2880\ha^3 \right]^{-1}\\
\ha \geq s > 0 \geq s': &\left[s^2s'^2(s'+t_0)^3(s-2\ha)^2 \right]\\
&\left[48t_0^2\ha(t_0 + 3\ha) \right]^{-1}\\
s, s' \leq 0: &-\left[s^2(s'+t_0)^3\left(8s^3s'^2 -9s^3s't_0+ 3s^3t_0^2  \right.\right.\\
&\left.\left.+ 15s^2s't_0(s'-t_0)-10s'^2t_0^3\right)\right]\left[360t_0^5\right]^{-1}\\
&-\left[s^2s'^2(s+t_0)^3(s'+t_0)^3\right]\left[36t_0^4(t_0+3\ha)\right]^{-1}\\ 
\end{cases}
\end{align}
\begin{align}
\lim_{t_0 \to \infty}k_{\vert x_0, y_1, y_2} (t_0 + s,t_0 + s') = &
\begin{cases}
s, s' > \ha: &\left[\nicefrac{s'^3}{12} - \nicefrac{\ha s'^2}{6}
+ \nicefrac{\ha^2s'}{12} - \nicefrac{\ha^3}{48}\right]s^2\\
&\left[\nicefrac{\ha^2s'^2}{12} - \nicefrac{s'^4}{24}\right]s
+ \nicefrac{s'^5}{120} - \nicefrac{\ha^3s'^2}{48}\\
s > \ha \geq s' > 0: &\left[s'^2\left(20\ha^3s - 10\ha s(\ha s + s'^2)\right.\right.\\
&\left.\left.+2\ha s'^3 + 5(s^2s'^2 - \ha^4)\right)\right]\\
&\left[240\ha\right]^{-1}\\
s > \ha > 0 \geq s': &-\nicefrac{1}{48}\left[\ha s'^2(\ha^2 - 4\ha s + 2s^2\right]\\
\ha \geq s,s' > 0: &\left[s'^2\left(20\ha s^2(\ha - s) - 10\ha s s'^2\right.\right.\\
&\left.\left.+2\ha s'^3 +5s^2(s^2+s'^2)\right)\right]\left[240\ha\right]^{-1}\\
\ha \geq s > 0 \geq s': &\left[s^2s'^2(s-2\ha)^2\right]\left[48\ha\right]^{-1}\\
s, s' \leq 0: &-\nicefrac{1}{120}\left[s^2(s^3-5s^2s'+10ss'2-10\ha s'^2\right]\\
\end{cases}\label{eq:wp2-cov-post3}
\end{align}
Eq.~\eqref{eq:wp2-cov-post3} concludes the proof of Theorem 2 as it shows that
the covariance function after the RK step is finite for all values of $s, s'$.

\subsection{Posterior predictive mean and covariance of the thrice integrated WP}
\label{sec:WP3-explicit}

Here we list the equations of posterior mean and covariance for the thrice
integrated WP and the generic 3-stage RK method. The same structure as in
Sec.~\ref{sec:WP2-explicit} was applied.

\begin{align}
\lim_{t_0 \to \infty} \mu_{\vert x_0} (t_0 + s) =\;&x_0\\
\lim_{t_0 \to \infty} \mu_{\vert x_0, y_1} (t_0 + s) =\;&x_0 + s y_1\\
\lim_{t_0 \to \infty} \mu_{\vert x_0, y_1, y_2} (t_0 + s) =\; 
&x_0 + \left(s - \frac{s^2}{2hu}\right)y_1 + \frac{s^2}{2hu}y_2 \\
\lim_{t_0 \to \infty} \mu_{\vert x_0, y_1, y_2, y_3} (t_0 + s) =\; 
  &x_0 + \left(s - \frac{h(\frac{s^2u}{2} + \frac{s^2v}{2}) - \frac{s^3}{3}}{h^2uv}\right)y_1\\
  &+ \left(\frac{s^2(2s - 3hv)}{6h^2u(u-v)}\right)y_2 + \left(-\frac{s^2(2s - 3hu)}{6h^2v(u-v)}\right)y_3
\end{align}

As in the case of the twice integrated Wiener process, the covariance function
is infinite for $\lim_{t_0 \to \infty}$. Therefore, we only list the final
posterior covariance function.

For $s, s' > hv > hu > 0$:
\begin{multline}
\lim_{t_0 \to \infty} k_{\vert x_0, y_1, y_2, y_3} (t_0 + s, t_0 + s')=\\
\left\{ \left[-21h^5u(s^2+s'^2) + 14h^4(s^3+s'^3)\right]v^5 + \left[ -21h^5u^2(s^2 + s'^2) + 14h^4u(s^3+s'^3)\right.\right.\\
\left. + 126h^4u(s^2s' + ss'^2) -84h^3(s^3s'+ss'^3)\right]v^4 + \left[ -21h^5u^3(s^2 + s'^2) + 14h^4u^2(s^3+s'^3)\right.\\
\left. +126h^4u^2(s^2s' + ss'^2) -84h^3u(s^3s' + ss'^3) -630h^3us^2s'^2 +210h^2(s^3s'^2 + s^2s'^3)\right]v^3\\
+ \left[ -21h^5u^4(s^2 + s'^2) +14h^4u^3(s^3 + s'^3) + 126h^4u^3(s^2s' + ss'^2) - 84h^3u^2(s^3s' + ss'^3)\right.\\
\left. -252h^3u^2s^2s'^2 + 378h^2u(s^3s'^2 + s^2s'^3) -392hs^3s'^3\right]v^2 + \left[14h^4u^4(s^3 + s'^3)\right.\\
- 84h^3u^3(s^3s' + ss'^3) +126h^2u^2(s^3s'^2 - s^2s'^2 + s^2s'^3) -224hus^3s'^3 + 210s^3s'^4 -126s^2s'^5\\
\left. + 42ss'^6 - 6s'^7\right]v + 42h^2u^3(s^3s'^2 + s^2s'^3)\left. -56hu^2s^3s'^3\right\}/(30240v)\\
\end{multline}
For $s > hv \geq s' > hu > 0$:
\begin{multline}
\lim_{t_0 \to \infty} k_{\vert x_0, y_1, y_2, y_3} (t_0 + s, t_0 + s')=\\
\big\{ \left(21h^7us'^2 - 14h^6s'^3\right)v^6 + \left( -126suh^6s'^2 +
      84sh^5s'^3\right)v^5\\
    + (315 u h^5 s^2 s'^2 - 210 h^4 s^2 s'^3)v^4\\
    + (-378 h^5 s^2 s'^2
    u^2 - 168 h^4 s^3 s'^2 u + 252 h^4 s^2 s'^3 u + 112 h^3 s^3 s'^3)
    v^3\\ + 
    (-21 h^7 s^2 u^5 - 21 h^7 s'^2 u^5 + 126 h^6 s^2 s' u^4 + 126 h^6
    s s'^2 u^4 - 126 h^5 s^2 s'^2 u^3 + 252 h^4 s^3 s'^2 u^2\\
    + 252 h^4
    s^2 s'^3 u^2 - 168 h^3 s^3 s'^3 u - 315 h^3 s^2 s'^4 u + 126 h^2
    s^2 s'^5 - 42 h^2 s s'^6 + 6 h^2 s'^7)v^2\\
    + (14h^6 s^3 u^5 - 84 h^5 s^3 s' u^4 - 126 h^5 s^2 s'^2 u^4 -
    84 h^5 s s'^3 u^4 + 84 h^4 s^3 s'^2 u^3\\ + 84 h^4 s^2 s'^3 u^3 -
    168 h^3 s^3 s'^3 u^2 + 210 h^2 s^3 s'^4 u + 42 h^2 s s'^6 u - 6
    h^2 s'^7 u - 84 h s^3 s'^5 ) v \\
+42 h^4 s^3 s'^2 u^4 + 42 h^4 s^2 s'^3 u^4 - 56 h^3 s^3 s'^3 u^3 - 21
h s^2 s'^6 u + 14 s^3 s'^6\big\}/(-30240 h^2 v^2 + 30240 uh^2 v)
\end{multline}
For $s > hv > hu \geq s' > 0$:
\begin{multline}
\lim_{t_0 \to \infty} k_{\vert x_0, y_1, y_2, y_3} (t_0 + s, t_0 +
s')=\\
\frac{s'^2}{30240h^2uv}
\big[ -21h^7(u^5v^2 + u^4v^3 + u^3v^4 +u^2v^5) - 21hs^2s'^4(u+v)-126h^4s^2u^3v(hu - s')\\
+126h^6s(u^4v^2+u^3v^3+u^2v^4) +14h^6s'(u^5v + u^4v^2 +u^3v^3 + u^2v^4 + uv^5) + 14s^3s'^4\\
 + 63h^5u^3v^2s^2-315h^5u^2v^3s^2 -84h^5ss'(u^4v+u^3v^2+u^2v^3+uv^4) -84h^4u^3vs^3\\
 +42h^4u^4s^2(s + s')-42h^4u^2v^2s^2s' + 42h^2uvss'^4 +168h^4u^2v^2s^3 +210h^4uv^3s^2s'\\
 -56h^3u^2s^3s'(u-v)-112h^3uv^2s^3s' -6h^2uvs'^5\big]
\end{multline}
For $s > hv > hu > 0 \geq s'$:
\begin{multline}
  \lim_{t_0 \to \infty} k_{\vert x_0, y_1, y_2, y_3} (t_0 + s, t_0 +
  s')=\\
\frac{h s'^2}{4320 v}
  \big[  (-3 h^4 u^4 v^2 - 3h^4 u^3 v^3 - 3h^4 u^2 v^4 -
  3h^4uv^5
  + 18 h^3 su^3 v^2 + 18 h^3 s u^2 v^3 + 18 h^3 suv^4\\
  + 2s' h^3 u^4 v + 2s' h^3 u^3 v^2 + 2 s' h^3 u^2 v^3 + 2 s' h^3 u
  v^4 + 2 s' h^3 v^5 - 18 h^2 s^2 u^3 v + 9 h^2 s^2 u^2 v^2\\
  - 45 h^2 s^2 u v^3 - 12 s' h^2 s u^3 v - 12 s' h^2 s u^2 v^2 - 12 s'
  h^2 suv^3 - 12 s'h^2 sv^4 + 6 hs^3 u^3\\
  - 12hs^3 u^2 v + 24 hs^3 uv^2 + 6s' hs^2 u^3 + 18 s' h s^2 u^2 v 
  - 6 s' h s^2 u v^2  + 30 s' h s^2 v^3 - 8 s' s^3 u^2 \\
  + 8 s' s^3 u v - 16 s' s^3 v^2 )\big]
\end{multline}
For $hv \geq s, s' > hu > 0$:
\begin{multline}
\lim_{t_0 \to \infty} k_{\vert x_0, y_1, y_2, y_3} (t_0 + s, t_0 +
s')=\\
\big\{(378 h^5 s^2 s'^2 u^2 - 252 h^4 s^3 s'^2 u - 252 h^4 s^2 s'^3 u
+ 168 h^3 s^3 s'^3 )v^3\\
+ (21 h^7 s^2 u^5 + 21 h^7 s'^2 u^5 - 126 h^6 s^2 s' u^4  - 126 h^6 s
s'^2 u^4 + 126 h^5 s^2 s'^2 u^3 \\
- 252 h^4 s^3 s'^2 u^2 - 252 h^4 s^2 s'^3 u^2 + 315 h^3 s^4 s'^2 u + 
168 h^3 s^3 s'^3 u + 315 h^3 s^2 s'^4 u'\\
-210 h^2 s^4 s'^3 - 126 h^2 s^2 s'^5 + 42 h^2 s s'^6 - 6h^2 s'^7)v^2\\
+ (-14 h^6 s^3 u^5 - 14h^6 s'^3 u^5 + 84 h^5 s^3 s' u^4 + 126 h^5 s^2
s'^2 u^4 + 84 h^5 s s'^3 u^4 \\
- 84 h^4 s^3 s'^2 u^3 - 84 h^4 s^2 s'^3 u^3 + 168 h^3 s^3 s'^3 u^2 
-126 h^2 s^5 s'^2 u\\
- 126 h^2 s^5 s'^2 u - 210 h^2 s^3 s'^4 u - 42 h^2 s s'^6 u + 6 h^2
s'^7 u + 84 h s^5 s'^3 + 84 h s^3 s'^6 )v\\
-42 h^4 s^3 s'^2 u^4 - 42 h^4 s^2 s'^3 u^4 + 56 h^3 s^3 s'^3 u^3 + 21
h s^6 s'^2 u + 21 h s^2 s'^6 u\\
- 14 s^6 s'^3 - 14 s^3 s'^6\big\}/(30240 h^2 v^2 - 30240 uh^2 v)
\end{multline}
For $hv \geq s > hu \geq s' > 0$:
\begin{multline}
\lim_{t_0 \to \infty} k_{\vert x_0, y_1, y_2, y_3} (t_0 + s, t_0 + s')=\\
\frac{s'^2}{30240h^2uv(u-v)}
\big[ -21h^7u^6v^2 -21hu^2s^6 -21hs^2s'^4(u^2-v^2) +126h^2u^2vs^5\\
+126h^4u^4vs(h^2uv-hus-s^2) +14s'(h^6u^6v + s^6u) +14s^3s'^4(u-v) +189h^5u^4v^2s^2 \\
-378h^5u^3v^3s^2 -84h^4u^4vss'(hu - s) -84huvs^5s' +42h^4u^5s^2(s + s')\\
 +42h^2ss'^4(u^2v-uv^2) +252h^4u^2v^2s^2(us+vs+vs') -168h^3uv^2s^2s'(hu^2 + us + vs)\\
 -315h^3u^2v^2s^4-56h^3u^4s^3s' +112h^3u^3vs^3s' +210h^4uv^4s4s' -6h^2s'^5(u^2v-uv^2)\big]\\
\end{multline}
For $hv \geq s > hu > 0 \geq s'$:
\begin{multline}
  \lim_{t_0 \to \infty} k_{\vert x_0, y_1, y_2, y_3} (t_0 + s, t_0 +
  s')=\\
  \frac{s'^2}{4320h^2v(u-v)} \big[ -3h^7u^5v^2 +18h^6su^4v^2
  +2s'h^6u^5v -18h^5s^2u^4v +27h^5s^2u^3v^2 \\-54h^5s^2u^2v^3
  -12s'h^5su^4v +6h^4s^3u^4 -18h^4s^3u^3v +36h^4s^3u^2v^2
  +36h^4s^3uv^3 \\+6s'h^4s^2u^4 +12s'h^4s^2u^3v -24s'h^4s^2u^2v^2
  +36s'h^4s^2uv^3 -45h^3s^4uv^2 -8s'h^3s^3u^3 \\+16s'h^3s^3u^2v
  -24s'h^3s^3uv^2 -24s'h^3s^3v^3 +18h^2s^5uv +30s'h^2s^4s^2 -3hs6u
  \\-12s'h^5v +2s's^6 \big]
\end{multline}
For $hv > hu \geq s, s' > 0$:
\begin{multline}
  \lim_{t_0 \to \infty} k_{\vert x_0, y_1, y_2, y_3} (t_0 + s, t_0 + s')=\\
  - \frac{s'^2}{30240h^2uv} \big[ 126h^5s^2u^4v - 378h^5s^2u^3v^2 -
  42h^4s^3u^4 + 84h^4s^3u^3v +252h^4s^3u^2v^2 \\- 42h^4s^2s'u^4 +
  84h^4s^2s'u^3v + 252h^4s^2s'u^2v^2 + 56h^3s^3s'u^3 -336h^3s^3s'u^2v
  \\ -168h^3s^3s'uv^2 -126h^2s^5uv + 210h^2s^4s'uv - 42h^2ss'^4uv +
  6h^2s'^5uv \\+21hs^6v +21hs^2s'^4u + 21hs^2s'^4v -14s^6s' -14s^3s'^4
  \big]
\end{multline}
For $hv > hu \geq s > 0 \geq s'$:
\begin{multline}
  \lim_{t_0 \to \infty} k_{\vert x_0, y_1, y_2, y_3} (t_0 + s, t_0 +
  s')=\\
  - \frac{s^2 s'^2}{4320 h^2 u v} \big[ 18 h^5 u^4 v - 54 h^5 u^3 v^2 - 6 h^4 s u^4 \\
  + 12h^4 s u^3v + 36 h^4 s u^2v^2 - 6 s' h^4 u^4 + 12 s' h^4 u^3 v +
  36 s' h^4 u^2 v^2 + 8 s' h^3 s u^3 \\- 48 s' h^3 s u^2 v - 24 s' h^3
  s u v^2 - 18 h^2 s^3 u v + 30 s' h^2 s^2 u uv + 3 h s^4 v - 2s' s^4
  \big]
\end{multline}
For $hv > hu > 0 \geq s, s'$:
\begin{multline}
  \lim_{t_0 \to \infty} k_{\vert x_0, y_1, y_2, y_3} (t_0 + s, t_0 +
  s')=\\
  \frac{hs^2 s'^2 u^2 \left( h s u - 4 s s' + 3 h s' u \right)}{2160 v} \\
  - \frac{ s^2}{5040} \big[ 21 h^3 s'^2 u^3 - 63 v h^3 s'^2 u^2 + 14h^2 s s'^2 u^2 + 42 v h^2 s s'^2  \\
  + 14 h^2 s'^3 u^2 + 42 v h^2 s'^3 u - 56 h s s'^3 u - 28 v h s s'^3  \\
  - s^5 + 7s^4 s' - 21 s^3 s'^2 + 35 s^2 + s'^3 \big] \\
\end{multline}


\section{Square-exponential kernel cannot yield Euler's method}
\label{sec:se-kernel}

We show that the square-exponetial (SE, aka. RBF, Gaussian) kernel
cannot yield Euler's method for finite length-scales.

\newcommand{\tsq}{\theta^2}
\newcommand{\lsq}{\lambda^2}

The SE kernel and its derivatives are
\begin{align}
k(t,t') &= \tsq \exp(\nicefrac{-\norm{t-t'}^2}{2\lsq})\\
\kd(t,t') &= \frac{(t-t')}{\lsq}k(t,t')\\
\dkd(t,t') &= \left[\frac{1}{\lsq} 
            - \left(\frac{(t-t')}{\lsq}\right)^2\right]k(t,t')
\end{align}

To show that this choice does not yield Euler's method, we proceed as in the
case for the GMRK methods. The predictive mean after observing $x_0$ and $y_1$
is given by
\begin{align}
\mu_{\vert x_0, y_1} (t_0 + s) &=
  \begin{bmatrix}k(t_0+s,t_0) &\kd(t_0+s,t_0)\end{bmatrix}
  \underbrace{\begin{bmatrix}k(t_0,t_0) & \kd(t_0,t_0)\\
                             \dk(t_0,t_0)&\dkd(t_0,t_0)\end{bmatrix}^{-1}}_{\ec K}
  \begin{pmatrix} x_0\\y_1\end{pmatrix}\notag\\
&= \begin{bmatrix} \tsq \exp(\nicefrac{-\norm{s}^2}{2\lsq}) &
                   s \frac{\tsq \exp(\nicefrac{-\norm{s}^2}{2\lsq})}{\lsq}
   \end{bmatrix}
  \begin{bmatrix} \tsq & 0\\ 0 & \nicefrac{\tsq}{\lsq}\end{bmatrix}^{-1}
  \begin{pmatrix} x_0\\y_1\end{pmatrix}\notag\\
&=\begin{bmatrix} \exp(\nicefrac{-\norm{s}^2}{2\lsq}) &
                s \exp(\nicefrac{-\norm{s}^2}{2\lsq})\end{bmatrix}
 \begin{pmatrix} x_0\\y_1\end{pmatrix}\notag\\
&=\exp(\nicefrac{-\norm{s}^2}{2\lsq})x_0 + s \exp(\nicefrac{-\norm{s}^2}{2\lsq})y_1\label{eq:se-posterior}
\intertext{evaluated at $h$ yields}
&=\exp(\nicefrac{-\norm{h}^2}{2\lsq})x_0 + h \exp(\nicefrac{-\norm{h}^2}{2\lsq})y_1\label{eq:se-post-at-h}
\end{align}
An interesting observation, left out in the main paper to avoid
confusion, is that Eq.~\eqref{eq:se-post-at-h} \emph{does} indeed
produce the weights for Euler's method for the limit $\lim_{\lambda
  \to \infty}$. In fact, it can even be used to derive second and
third order Runge-Kutta means, too. Future work will provide more
insight into this property. However, this limit in the length-scale
yields a Gaussian process posterior that has little use as a
probabilistic numerical method, because its posterior covariance
vanishes everywhere. This is in contrast to the integrated Wiener
processes discussed in the paper, which yield proper finite,
interpretable posterior variances, even after in the limit in
$\tau$. Finally, SE kernel-GPs are not Markov. Inference in these
models has cost cubic in the number of observations, reducing their
utility as numerical methods.

\end{document}